%% file: example_paper.tex
\documentclass{article}
\usepackage[table]{xcolor} 
\usepackage{booktabs}
\definecolor{highlightcolor}{RGB}{253, 243, 232}
\usepackage{microtype}
\usepackage{graphicx}
\usepackage{subcaption}
\usepackage{booktabs} 
\usepackage{multirow}
\usepackage[accepted]{icml2026}

\usepackage{icml2026}

\usepackage{amsmath}
\usepackage{amssymb}
\usepackage{mathtools}
\usepackage{amsthm}
\usepackage{tabularx} 
\usepackage{booktabs}
\usepackage{hyperref}

\usepackage[capitalize,noabbrev]{cleveref}

\theoremstyle{plain}
\newtheorem{theorem}{Theorem}[section]
\newtheorem{proposition}[theorem]{Proposition}

\newtheorem{corollary}[theorem]{Corollary}
\theoremstyle{definition}
\newtheorem{definition}[theorem]{Definition}

\theoremstyle{remark}
\newtheorem{remark}[theorem]{Remark}
\usepackage{xcolor}
\usepackage[normalem]{ulem} 
\newcommand\tsout{\bgroup\markoverwith{\textcolor{red}{\rule[0.5ex]{2pt}{0.4pt}}}\ULon}

\usepackage[textsize=tiny]{todonotes}

\icmltitlerunning{MuonSSM: Orthogonalizing State Space Models for Sequence Modeling}

\begin{document}

\twocolumn[
  \icmltitle{MuonSSM: Orthogonalizing State Space Models for Sequence Modeling}



  \icmlsetsymbol{equal}{*}
  \icmlsetsymbol{equaladv}{\dag}

  \begin{icmlauthorlist}
    \icmlauthor{Thai-Khanh Nguyen}{equal,yy}
    \icmlauthor{Ngoc-Bich-Uyen Vo}{equal,yyy}
    \icmlauthor{Thieu N. Vo}{equaladv,y}
    \icmlauthor{Tan M. Nguyen}{equaladv,yyyy}
    \icmlauthor{Cuong Pham}{equaladv,yyy}
  \end{icmlauthorlist}
  
  \icmlaffiliation{y}{Department of Computer Science, University of Bath, Bath BA2 7AY, UK}
  \icmlaffiliation{yy}{Faculty of Information Technology, Dainam University, Hanoi University of Science and Technology, Hanoi 10000, Vietnam}
  \icmlaffiliation{yyy}{Faculty of Artificial Intelligence, Posts and Telecommunications Institute of Technology, Hanoi 10000, Vietnam}
  \icmlaffiliation{yyyy}{Departments of Mathematics, National University of Singapore, Singapore 119076, Singapore}
  \icmlcorrespondingauthor{Tan M. Nguyen}{tanmn@nus.edu.sg}
  \icmlcorrespondingauthor{Cuong Pham}{cuongpv@ptit.edu.vn}

  \icmlkeywords{Machine Learning, ICML}
  \vskip 0.3in
]


\printAffiliationsAndNotice{$^*$Equal contribution $\,^{\dagger}$Co-last authors}  

\begin{abstract}
  State space models (SSMs) have emerged as efficient linear-time alternatives to attention for long-sequence modeling. However, existing SSMs often suffer from instability and memory degradation over extended horizons due to poorly conditioned first-order updates and unbalanced update geometry. We introduce MuonSSM, a general framework that stabilizes SSM training by explicitly conditioning the geometry of memory updates rather than the recurrent transition matrix. MuonSSM augments SSMs with a momentum-based pathway and a lightweight Newton--Schulz transformation on low-rank input injections, yielding bounded and spectrally conditioned updates while preserving parallel scan complexity. Theory shows that MuonSSM improves gradient propagation, mitigates spectral amplification, and enriches memory representations over long horizons. Extensive experiments across language, vision, and time-series benchmarks show consistent gains in accuracy, robustness, and long-context performance when integrated into diverse SSM backbones. These results establish geometric conditioning of updates as a principled pathway to stable, scalable sequence modeling.
\end{abstract}

\section{Introduction}
\input{content/introduction}


\section{Methodology: MuonSSM}
\label{sec:method}
\input{content/method}

\section{Theoretical Analysis}
\label{sec:theoretical_analysis}
\input{content/theory}

\section{Experiments}
\label{sec:experiments}
\input{content/exp}

\section{Empirical Analysis}
\label{sec:analysis}
\input{content/analysis}

\section{Related Work}
\label{sec:related_work}
\input{content/related}

\section{Concluding Remarks}
\label{sec:conclusion}
\input{content/conclusion}

\newpage
\section*{Impact Statement}


This paper presents work whose goal is to advance the field of Machine Learning. There are many potential societal consequences of our work, none
which we feel must be specifically highlighted here.

\section*{Acknowledgements}
This research is supported by the National Research Foundation Singapore under the AI Singapore Programme (AISG Award No: AISG2-TC-2023-012-SGIL). This research is also supported by the Ministry of Education, Singapore, under the Academic Research Fund Tier 1 (FY2023) (A-8002040-00-00, A-8002039-00-00), the NUS Presidential Young Professorship Award (A-0009807-01-00), the NUS Artificial Intelligence Institute--Seed Funding (A-8003062-00-00), and the Cross Faculty Grant 2025, CFG25-012 (A-8004460-00-00).

\bibliography{example_paper}
\bibliographystyle{icml2026}

\newpage
\appendix
\onecolumn
\begin{center}
{\bf \Large{Supplement to ``MuonSSM: Orthogonalizing State Space Models for Sequence Modeling''}}
\end{center}

\section*{Appendix A: Proofs of Theoretical Results}
\label{app:proofs}
\input{content/appendix_3}

\section*{Appendix B: Training Details}
\input{content/appendix_4}

\section*{Appendix C: Additional Empirical Analyses} \label{app:additional_experiments}
\input{content/appendix}


\end{document}

%% file: content/introduction.tex
\begin{table*}[t]
\centering
\caption{Special cases of recent SSMs recovered from the general associative memory update in Eq.~\eqref{eq:general_ssm}.}
\label{tab:ssm_special_cases}
\renewcommand{\arraystretch}{1.3}
\begin{tabular}{l c c c p{8cm}}
\textbf{Model} & $\alpha_t$ & $\beta_t$ & $\eta $ & \textbf{Update Rule} \\
\midrule
Mamba~\cite{gu2024mamba} 
& $\alpha_t$ & $1$ & $0$ 
& $\mathbf{S}_{t} = \alpha_t \mathbf{S}_{t-1} + \mathbf{v}_t \mathbf{k}_t^{\top}$ \\[1ex]
DeltaNet~\cite{Yang2024ParallelizingLT} 
& $1$ & $\beta_t$ & $1$ 
& $\mathbf{S}_{t} = \mathbf{S}_{t-1}(\mathbf{I} - \beta_t \mathbf{k}_t \mathbf{k}_t^{\top}) + \beta_t \mathbf{v}_t \mathbf{k}_t^{\top}$ \\[1ex]
Gated DeltaNet~\cite{Yang2024GatedDN} 
& $\alpha_t$ & $\beta_t$ & $1$ 
& $\mathbf{S}_{t} = \mathbf{S}_{t-1}\bigl(\alpha_t (\mathbf{I} - \beta_t \mathbf{k}_t \mathbf{k}_t^{\top})\bigr) + \beta_t \mathbf{v}_t \mathbf{k}_t^{\top}$ \\[1.5ex]
LongHorn~\cite{Liu2024LonghornSS} 
& $1$ & $\frac{\beta_t}{1 + \beta_t \mathbf{k}_t^{\top} \mathbf{k}_t}$ & 1
& $\mathbf{S}_{t} = \mathbf{S}_{t-1}\bigl(\mathbf{I} - \frac{\beta_t}{1 + \beta_t \mathbf{k}_t^{\top} \mathbf{k}_t} \mathbf{k}_t \mathbf{k}_t^{\top} \bigr) + \frac{\beta_t}{1 + \beta_t \mathbf{k}_t^{\top} \mathbf{k}_t} \mathbf{v}_t \mathbf{k}_t^{\top}$ \\[1ex]
\bottomrule
\end{tabular}
\end{table*}

Modeling sequences efficiently and reliably remains a central challenge in modern machine learning. Transformer-based architectures have achieved strong empirical success~\cite{vaswani2017attention}, but their quadratic complexity in sequence length limits scalability as contexts grow~\cite{tay2020long}. This has renewed interest in state space models (SSMs), which offer linear-time alternatives based on recurrent or scan-based dynamics. Starting from S4~\cite{gu2021efficiently, gu2020hippo} and followed by architectures H3~\cite{fu2022hungry}, S5~\cite{smith2022simplified}, and Mamba~\cite{gu2024mamba}, modern SSMs combine strong performance with hardware-efficient selective scans, making them a promising replacement for attention-based models on long sequences.

Despite their efficiency, scan-friendly SSMs can degrade over long horizons and deep stacks: common input-dependent affine updates remain fundamentally first-order and can suffer from poor long-range signal propagation and optimization instability~\cite{dao2024transformers,pascanu2013difficulty}. Recent variants mitigate this mainly via gating/normalization or input-dependent scaling~\cite{Liu2024LonghornSS, Yang2024GatedDN, Yang2024ParallelizingLT}, which helps control magnitudes but does not introduce temporal inertia nor explicitly condition the geometry of accumulated updates.

In this work, we propose MuonSSM%
\footnote{The name MuonSSM reflects conceptual inspiration from the Muon optimizer~\cite{jordan2024muon}, which demonstrated the practical effectiveness of geometric conditioning for stabilizing optimization. Unlike Muon, which acts on parameter gradients, MuonSSM applies this principle directly within state space memory updates. }, a framework for stabilizing state space memory by augmenting SSMs with explicit momentum in their update dynamics. Our core insight is that memory updates in SSMs can be viewed through an online learning lens~\cite{Behrouz2025ItsAC, behrouz2024titans}: just as momentum methods improve optimization by accumulating gradient directions over time, introducing temporal inertia into state updates can improve long-horizon information propagation~\cite{nguyen2020momentumrnn, ma2022mega}. MuonSSM incorporates a lightweight momentum pathway to accumulate update directions across timesteps, together with a parallelizable normalization of input-dependent updates to maintain well-conditioned memory evolution. Importantly, these modifications preserve the affine structure required for efficient parallel scans and do not increase asymptotic computational complexity. 


\noindent\textbf{Contributions.} Our contributions are threefold:

(i) We introduce MuonSSM, a family of SSMs that integrate momentum-based second-order dynamics and implicit spectral conditioning into scan-based recurrence;

(ii) We provide a theoretical analysis showing that MuonSSM-style updates preserve parallelizability while improving gradient propagation; and

(iii) We demonstrate empirically that MuonSSM variants consistently improve accuracy, robustness, and length generalization across a wide range of modalities and SSM backbones, including language, vision, and time-series tasks.

\noindent\textbf{Organization.} We introduce the MuonSSM architecture in Section~\ref{sec:method}. Theoretical analysis regarding parallel associative structure and gradient propagation follows in Section~\ref{sec:theoretical_analysis}, with detailed proofs deferred to the appendix. Section~\ref{sec:experiments} presents experimental evaluations across language, vision, and time-series benchmarks, with additional empirical analysis in Section~\ref{sec:analysis}. Finally, Section~\ref{sec:related_work} reviews related work, and Section~\ref{sec:conclusion} concludes the paper.

%% file: content/method.tex
\begin{figure*}[htbp]
\centering
\includegraphics[width=0.9\textwidth]{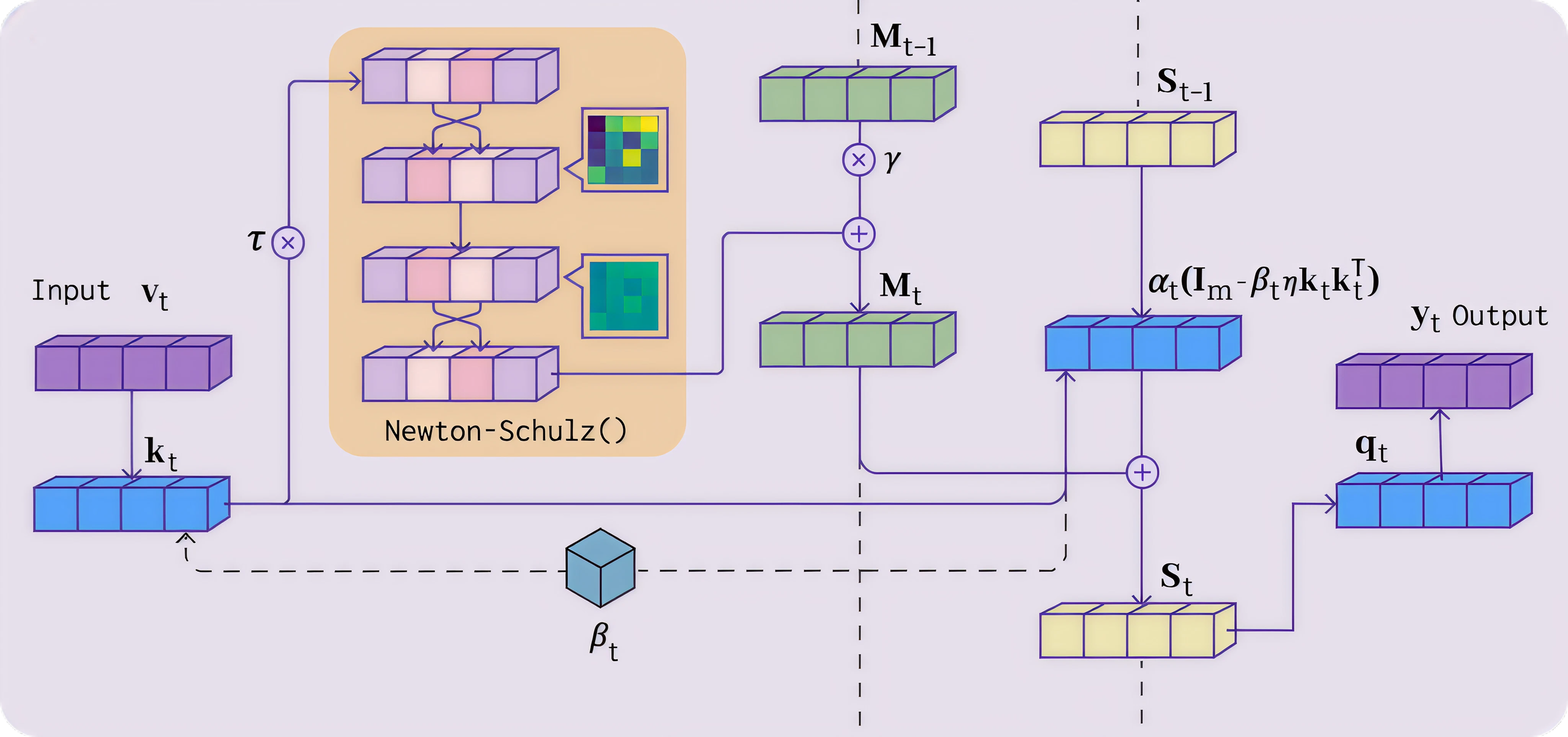}
\caption{The MuonSSM architecture. At each timestep, the model forms a low-rank input injection $\tau\beta_t\mathbf{v}_t\mathbf{k}_t^\top$, applies a lightweight Newton--Schulz normalization, accumulates the normalized update through a momentum state $\mathbf{M}_t$, and updates the memory state $\mathbf{S}_t$ through an input-dependent transition. The output is computed as $\mathbf{y}_t=\mathbf{S}_t\mathbf{q}_t$.}
\label{fig:proposed_method}
\end{figure*}

\subsection{Background: State Space Models as Associative Memory}
Following recent works providing a unified view of SSMs as online associative memory mechanisms~\cite{Yang2024GatedDN, Behrouz2025ItsAC}, we adopt the framework where at each timestep $t$, the model maintains a memory matrix $\mathbf{S}_t \in \mathbb{R}^{d \times m}$ and receives a key-value pair $(\mathbf{k}_t, \mathbf{v}_t) \in \mathbb{R}^{m} \times \mathbb{R}^{d}$. The memory state is updated as

\begin{equation}
\mathbf{S}_t = \mathbf{S}_{t-1}\bigl(\alpha_t(\mathbf{I}_m - \beta_t\eta \mathbf{k}_t\mathbf{k}_t^{\top})\bigr) + \beta_t \mathbf{v}_t\mathbf{k}_t^{\top},
\label{eq:general_ssm}
\end{equation}
where $\alpha_t \in (0, 1]$ controls memory retention, $\beta_t > 0$ determines the update magnitude, $\eta$ modulates recall correction, and $\mathbf{I}_m \in \mathbb{R}^{m \times m}$ is the identity matrix. This formulation shows that recent SSM variants (Mamba, DeltaNet, Gated DeltaNet, LongHorn) primarily differ in their choices of scalar gates $\alpha_t$, $\beta_t$, and $\eta$ (see Table~\ref{tab:ssm_special_cases}).

\subsection{Limitations of First-Order Updates}

All updates derived from Eq.~\eqref{eq:general_ssm} remain inherently first-order. The change in memory is a rank-one modification:
\begin{equation}
\Delta \mathbf{S}_t \propto (\mathbf{v}_t - \alpha_t\eta \mathbf{S}_{t-1}\mathbf{k}_t)\mathbf{k}_t^{\top},
\end{equation}
which is structurally confined to the span of the current key $\mathbf{k}_t$. Over long sequences, repeated rank-one updates can induce:
\begin{enumerate}
    \item \textbf{Spectral anisotropy}: Singular values become highly non-uniform.
    \item \textbf{Gradient degradation}: Vanishing gradients through $\prod_{n=T}^{t} \mathbf{D}_n$ where $\mathbf{D}_n = \alpha_n(\mathbf{I}_m - \beta_n\eta \mathbf{k}_n\mathbf{k}_n^{\top})$.
    \item \textbf{Memory interference}: New updates can overwrite previous information.
\end{enumerate}

Current SSMs address these issues only indirectly through scalar gating, normalization, or transition-matrix parameterization, such as HiPPO-inspired and diagonal SSM parameterizations~\citep{gu2020hippo,gu2022train,gu2022parameterization}. In contrast, MuonSSM explicitly conditions the geometry of input-dependent memory updates while also introducing an additional momentum pathway for gradient propagation.

\subsection{MuonSSM: Orthogonalizing State Space Models}
To mitigate these limitations while preserving parallel scan efficiency, we propose \emph{MuonSSM}. The key idea is to augment the memory dynamics with two simple components: a momentum pathway that accumulates update directions across timesteps, and a lightweight Newton--Schulz (NS) normalization applied to each low-rank input injection. These components condition memory updates while retaining the affine structure required for efficient parallel scans. Theoretical justification is provided in Section~\ref{sec:theoretical_analysis}.



\subsubsection{Momentum-Augmented Dynamics}

MuonSSM maintains an auxiliary momentum matrix $M_t \in \mathbb{R}^{d \times m}$, updated as
\begin{align}
  M_t &= \gamma M_{t-1} + \mathrm{NS}\!\left(\tau\beta_t v_t k_t^\top\right),
  \label{eq:momentum_update}\\
  S_t &= S_{t-1}\!\left(\alpha_t(I_m - \beta_t\eta k_t k_t^\top)\right) + M_t,
  \label{eq:muon_update}
\end{align}
where $\gamma\in(0,1]$ is the momentum decay coefficient and $\tau>0$ scales the normalized update. The operator $\mathrm{NS}(\cdot)$ is defined below.

\subsubsection{Single-iteration Newton--Schulz}
\label{sec:ns5}

Each input injection $X_t = \tau\beta_t\mathbf{v}_t\mathbf{k}_t^\top$ is a rank-1 matrix whose sole nonzero singular value $\sigma_t = \tau\beta_t\|\mathbf{v}_t\|\|\mathbf{k}_t\|$ varies arbitrarily across timesteps, producing spectrally unbalanced updates if left unnormalized. We apply a single-step Newton--Schulz iteration to each injection, which bounds singular values, preserves rank, and keeps per-step cost minimal without requiring an explicit SVD.

\begin{definition}[Single-iteration Newton--Schulz (NS)]
\label{def:ns5}
For $X \in \mathbb{R}^{d \times m}$, define:
\begin{align}
    \mathrm{NS}(X) &= \Bigl(a + b\,\tilde{X}\tilde{X}^{\top} + c\,\bigl(\tilde{X}\tilde{X}^{\top}\bigr)^{2} \Bigr)\tilde{X} \label{eq:ns_main} \\
    \text{where } \tilde{X} &= \frac{X}{\max(\|X\|_F, \delta)}
\end{align}
Here $(a,b,c) = (3.4445,-4.7750,2.0315)$~\cite{jordan2024muon}, and $\delta > 0$ is a small constant to prevent division by zero.
\end{definition}
The spectral properties of this operator and its backward-pass Jacobian geometry, which differs structurally from Frobenius normalization and drives rank enrichment in $\mathbf{M}_t$, are analyzed formally in Section~\ref{sec:theory_spectral}.

%% file: content/theory.tex
We establish the key theoretical properties of MuonSSM: parallelizability, gradient stability, spectral conditioning, and rank enrichment. Together, these properties explain how MuonSSM mitigates the limitations of first-order SSMs while preserving computational efficiency.

\subsection{Parallelizability} 
To enable parallel training, we cast the coupled memory and momentum updates~\eqref{eq:momentum_update}--\eqref{eq:muon_update} as a single block-affine recurrence, which admits efficient parallel associative scans.
\begin{proposition}[Block-Affine Recurrence]
\label{prop:block_affine}
Define the augmented state by horizontally concatenating the memory and momentum matrices:
\[
\mathcal{Z}_t = 
\begin{bmatrix} 
\mathbf{S}_t & \mathbf{M}_t 
\end{bmatrix} 
\in \mathbb{R}^{d \times 2m}.
\]
The coupled dynamics satisfy the linear recurrence:
\begin{equation}
\mathcal{Z}_t = \mathcal{Z}_{t-1}\mathbf{\Phi}_t + \mathbf{\Psi}_t ,
\label{eq:block_affine}
\end{equation}
where the transition matrix $\mathbf{\Phi}_t \in \mathbb{R}^{2m \times 2m}$ and input $\mathbf{\Psi}_t \in \mathbb{R}^{d \times 2m}$ are defined as:
\begin{equation}
\mathbf{\Phi}_t = \begin{bmatrix} \mathbf{D}_t & \mathbf{0} \\ \gamma\mathbf{I}_m & \gamma\mathbf{I}_m \end{bmatrix}, \quad \mathbf{\Psi}_t = \begin{bmatrix} \mathbf{U}_t & \mathbf{U}_t \end{bmatrix}
\end{equation}
with $\mathbf{D}_t = \alpha_t(\mathbf{I}_m - \beta_t\eta\mathbf{k}_t\mathbf{k}_t^{\top})$ and $\mathbf{U}_t = \mathrm{NS}(\tau \beta_t\mathbf{v}_t\mathbf{k}_t^{\top})$. 

\end{proposition}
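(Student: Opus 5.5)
The plan is a direct verification by block matrix multiplication. I will expand the right-hand side of \eqref{eq:block_affine} column-block by column-block and match each block against the defining updates \eqref{eq:momentum_update}--\eqref{eq:muon_update}.

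First I fix notation. I abbreviate $\mathbf{D}_t = \alpha_t(\mathbf{I}_m - \beta_t\eta\mathbf{k}_t\mathbf{k}_t^{\top})$ and $\mathbf{U}_t = \mathrm{NS}(\tau\beta_t\mathbf{v}_t\mathbf{k}_t^{\top})$. In this notation \eqref{eq:momentum_update} reads $\mathbf{M}_t = \gamma\mathbf{M}_{t-1} + \mathbf{U}_t$, and \eqref{eq:muon_update} reads $\mathbf{S}_t = \mathbf{S}_{t-1}\mathbf{D}_t + \mathbf{M}_t$. Substituting the first into the second eliminates $\mathbf{M}_t$:
\[
\mathbf{S}_t = \mathbf{S}_{t-1}\mathbf{D}_t + \gamma\mathbf{M}_{t-1} + \mathbf{U}_t .
\]
This substitution is the only real step. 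The memory update depends on the \emph{current} momentum $\mathbf{M}_t$, so it has to be rewritten in terms of $\mathbf{M}_{t-1}$ before the recurrence becomes first order in the augmented state $\mathcal{Z}_{t-1}$.

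Next I compute $\mathcal{Z}_{t-1}\mathbf{\Phi}_t$ with the $1\times 2$ block row $[\mathbf{S}_{t-1}\;\mathbf{M}_{t-1}]$ and the $2\times 2$ block matrix $\mathbf{\Phi}_t$:
\[
\mathcal{Z}_{t-1}\mathbf{\Phi}_t = \begin{bmatrix} \mathbf{S}_{t-1}\mathbf{D}_t + \gamma\mathbf{M}_{t-1} & \gamma\mathbf{M}_{t-1}\end{bmatrix}.
\]
Adding $\mathbf{\Psi}_t = [\mathbf{U}_t\;\mathbf{U}_t]$ gives the first block $\mathbf{S}_{t-1}\mathbf{D}_t + \gamma\mathbf{M}_{t-1} + \mathbf{U}_t$. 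By the eliminated form above, this equals $\mathbf{S}_t$. The second block is $\gamma\mathbf{M}_{t-1} + \mathbf{U}_t = \mathbf{M}_t$. Hence the sum equals $[\mathbf{S}_t\;\mathbf{M}_t] = \mathcal{Z}_t$. Along the way I check dimensions: $\mathbf{D}_t$ is $m\times m$, $\gamma\mathbf{I}_m$ is $m\times m$, and $\mathbf{U}_t\in\mathbb{R}^{d\times m}$ since $\mathrm{NS}$ preserves the shape of its argument by Definition~\ref{def:ns5}.

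Finally, to justify the parallel-scan remark, I note that $\mathbf{\Phi}_t$ and $\mathbf{\Psi}_t$ depend only on the inputs $(\mathbf{k}_t,\mathbf{v}_t,\alpha_t,\beta_t)$ and not on $\mathcal{Z}_{t-1}$. This holds because $\mathrm{NS}$ acts on the input injection rather than on the state. The recurrence is therefore a genuine affine map $\mathcal{Z}\mapsto\mathcal{Z}\mathbf{\Phi}_t+\mathbf{\Psi}_t$. Composing two such maps yields the pair $(\mathbf{\Phi}_1\mathbf{\Phi}_2,\ \mathbf{\Psi}_1\mathbf{\Phi}_2+\mathbf{\Psi}_2)$, and this operation is associative. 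Consequently a standard associative scan applies.

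I expect no real obstacle. The only point needing care is the elimination step, with $\mathbf{M}_t$ on the right of \eqref{eq:muon_update}, since it explains the off-diagonal block $\gamma\mathbf{I}_m$ in the lower-left of $\mathbf{\Phi}_t$ and the duplicated $\mathbf{U}_t$ in $\mathbf{\Psi}_t$.
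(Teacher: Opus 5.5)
Your proposal is correct and follows essentially the same route as the paper's proof. Both expand $\mathcal{Z}_{t-1}\mathbf{\Phi}_t + \mathbf{\Psi}_t$ blockwise and match the result to the momentum and memory updates through the substitution $\mathbf{M}_t = \gamma\mathbf{M}_{t-1} + \mathbf{U}_t$; the paper runs this in the reverse direction, from the block form back to $\mathbf{S}_t = \mathbf{S}_{t-1}\mathbf{D}_t + \mathbf{M}_t$, and it writes out associativity of $\oplus$ explicitly, which you validly obtain from composition of affine maps.
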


\begin{remark}[Parallel Training Efficiency]
\label{rem:parallelizability}
The block-affine recurrence~\eqref{eq:block_affine} enables parallel associative scans~\citep{blelloch1990prefix} with operator $(\mathbf{\Phi}_i, \mathbf{\Psi}_i) \oplus (\mathbf{\Phi}_j, \mathbf{\Psi}_j) = (\mathbf{\Phi}_i\mathbf{\Phi}_j, \mathbf{\Psi}_i\mathbf{\Phi}_j + \mathbf{\Psi}_j)
$. This reduces the recurrent depth from $O(L)$ to $O(\log L)$ while keeping total work linear in $L$. The NS pre-computation in Step~1 of Algorithm~\ref{alg:parallel} is local to each timestep and adds only a constant-factor overhead.
\end{remark}

Algorithm~\ref{alg:parallel} summarizes the full parallel training procedure based on Proposition~\ref{prop:block_affine}.

\subsection{Gradient Stability} 
\label{sec:grad_stability}
We next examine how the augmented dynamics affect gradient propagation through time. While standard SSMs suffer from repeated contraction induced by input-dependent transition matrices, the momentum pathway introduces an additional channel for gradient flow.

\begin{proposition}[Gradient Propagation in MuonSSM]
\label{prop:gradient_momentum}
Let the augmented state $\mathcal{Z}_t = 
\begin{bmatrix} \mathbf{S}_t & \mathbf{M}_t 
\end{bmatrix} $ evolve according to the affine recurrence
\[
\mathcal{Z}_{t} = \mathcal{Z}_{t-1}\mathbf{\Phi}_t + \mathbf{\Psi}_{t},
\qquad
\mathbf{\Phi}_t =
\begin{bmatrix}
\mathbf{D}_t & \mathbf{0}\\
\gamma \mathbf{I}_m & \gamma \mathbf{I}_m
\end{bmatrix}.
\]
Then the gradient of the loss $\mathcal{L}$ with respect to $\mathcal{Z}_{t-1}$ propagates backwards via the transposed transition matrices:
\[
\frac{\partial \mathcal{L}}{\partial \mathcal{Z}_{t-1}}
= \frac{\partial \mathcal{L}}{\partial \mathcal{Z}_{T}}\,
\prod_{n=T}^{t} \mathbf{\Phi}_n^\top,
\]
which expands into the upper-triangular block form
\[
\frac{\partial \mathcal{L}}{\partial \mathcal{Z}_{t-1}}
= \frac{\partial \mathcal{L}}{\partial \mathcal{Z}_T}\,
\begin{bmatrix}
\displaystyle\prod_{n=T}^{t} \mathbf{D}_n^\top &
\displaystyle\sum_{k=t}^{T} \Biggl( \prod_{j=T}^{k+1} \mathbf{D}_j^\top \Biggr) (\gamma \mathbf{I}_m)^{k-t+1} \\[3ex]
\mathbf{0} & (\gamma \mathbf{I}_m)^{T-t+1}
\end{bmatrix}
\]

with the convention that any empty product equals $\mathbf{I}_m$.
\end{proposition}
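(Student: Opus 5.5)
The plan is to obtain the formula by induction on $t$, using the block-affine form of Proposition~\ref{prop:block_affine}. First I fix the one-step backward rule. In \eqref{eq:block_affine}, $\mathbf{\Phi}_t$ depends only on $(\alpha_t,\beta_t,\mathbf{k}_t)$ and $\mathbf{\Psi}_t$ only on $(\tau,\beta_t,\mathbf{v}_t,\mathbf{k}_t)$, so neither depends on $\mathcal{Z}_{t-1}$. The map $\mathcal{Z}_{t-1}\mapsto\mathcal{Z}_t$ is therefore the affine map $X\mapsto X\mathbf{\Phi}_t+\mathbf{\Psi}_t$ on $\mathbb{R}^{d\times 2m}$. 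For a gradient $G_t=\partial\mathcal{L}/\partial\mathcal{Z}_t$ (same shape as $\mathcal{Z}_t$), the chain rule gives $\langle G_t, dX\,\mathbf{\Phi}_t\rangle_F=\langle G_t\mathbf{\Phi}_t^\top, dX\rangle_F$, hence $G_{t-1}=G_t\mathbf{\Phi}_t^\top$.

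As in the statement, I treat the loss as depending on the trajectory only through $\mathcal{Z}_T$; the readout at intermediate times would add further terms, which I would mention but not track. Iterating the one-step rule gives
\[
G_{t-1}=G_T\,\mathbf{\Phi}_T^\top\mathbf{\Phi}_{T-1}^\top\cdots\mathbf{\Phi}_t^\top .
\]
This is the first claim, with $\prod_{n=T}^{t}$ read as the ordered product with decreasing index.

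For the block form, I note that
\[
\mathbf{\Phi}_n^\top=\begin{bmatrix}\mathbf{D}_n^\top & \gamma\mathbf{I}_m\\ \mathbf{0} & \gamma\mathbf{I}_m\end{bmatrix}
\]
is block upper triangular. Products of block upper triangular matrices stay block upper triangular, and their diagonal blocks are the products of the diagonal blocks. This immediately gives the top-left block $\prod_{n=T}^{t}\mathbf{D}_n^\top$, the zero bottom-left block, and the bottom-right block $(\gamma\mathbf{I}_m)^{T-t+1}$.

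The only real content is the off-diagonal block $B_t$. I write $P_t=P_{t+1}\mathbf{\Phi}_t^\top$ with $P_{T+1}=\mathbf{I}_{2m}$, and set $A_{t+1}=\prod_{n=T}^{t+1}\mathbf{D}_n^\top$. Multiplying out the blocks gives the recursion
\[
B_t=\gamma A_{t+1}+\gamma B_{t+1}.
\]
I then check the claimed sum by downward induction. The base case $t=T$ gives $B_T=\gamma\mathbf{I}_m$, which is the $k=T$ term with an empty product. For the inductive step, $\gamma B_{t+1}$ reproduces the terms $k=t+1,\dots,T$ with each exponent raised by one, to $\gamma^{k-t+1}$. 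The new term $\gamma A_{t+1}$ is exactly the $k=t$ summand. Since $\gamma\mathbf{I}_m$ commutes with everything, the placement of the powers of $\gamma$ does not matter.

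The main obstacle is bookkeeping rather than mathematics. I need to fix the order of the non-commuting product $\prod_{n=T}^{t}$ (decreasing index, multiplied left to right) and the empty-product convention. With that done, the index shift in the off-diagonal sum matches exactly, and both the direct block multiplication and the induction are routine.
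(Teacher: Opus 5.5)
Your proposal is correct and takes the same route as the paper: the one-step chain rule $G_{t-1}=G_t\mathbf{\Phi}_t^\top$, iteration to the ordered product of transposed transitions, and block upper-triangularity of $\mathbf{\Phi}_n^\top$ to read off the diagonal blocks. Your explicit recursion $B_t=\gamma A_{t+1}+\gamma B_{t+1}$ with downward induction verifies the off-diagonal block, which the paper only asserts by ``expanding the product.'' Your caveat that the formula tracks the dependence of $\mathcal{L}$ only through $\mathcal{Z}_T$ makes explicit an assumption the paper leaves implicit.
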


\begin{remark}[Gradient Preservation via Scalar Momentum]
When $\gamma \approx 1$, the momentum pathway $(\gamma\mathbf{I}_m)^{T-t+1}$ introduces a scalar eigenvalue close to unity into the Jacobian product $\prod_{n=T}^{t}\boldsymbol{\Phi}_n^\top$. Although the memory pathway $\prod_{n=T}^{t}\mathbf{D}_n^\top$ may contract due to input-dependent attenuation, a non-vanishing gradient component is preserved through the momentum state $\mathbf{M}_t$, mitigating exponential decay and enabling stable long-range credit assignment.
\end{remark}

We emphasize that this analysis assumes a \emph{linear recurrence}; additional nonlinearities and deep stacking in practice introduce interactions beyond the closed-form Jacobian of Proposition~\ref{prop:gradient_momentum}. Empirical support is provided in Appendix~\hyperref[app:gradient_heatmaps]{C} and Figure~\ref{fig:gradient_heatmaps}, where gradient norm heatmaps confirm substantially more uniform long-range propagation.

\subsection{Spectral Conditioning and Newton--Schulz Geometry}
\label{sec:theory_spectral}


Beyond gradient flow, stable training requires controlling the spectral magnitude of memory updates. We analyze the NS operator in two complementary ways: its effect on singular values in the forward pass, and its distinct Jacobian geometry in the backward pass.

\begin{corollary}[Spectral Conditioning of Updates]
\label{cor:spectral}
Let $\rho(\sigma)=a\sigma+b\sigma^3+c\sigma^5$. Since $\|\tilde{\mathbf X}\|_F \le 1$, all singular values of $\tilde{\mathbf X}$ lie in $[0,1]$, and the NS map sends each singular value $\sigma$ to $\rho(\sigma)$. Hence, for any input matrix $\mathbf X$,
\begin{equation}
  \sigma_{\max}(\mathrm{NS}(\mathbf X))
  \le
  \sup_{\sigma\in[0,1]} |\rho(\sigma)|
  = 1+\varepsilon_u .
\end{equation}
For $(a,b,c)=(3.4445,-4.7750,2.0315)$, we have $\rho(\sigma)\ge 0$ on $[0,1]$ and $\sup_{\sigma\in[0,1]}\rho(\sigma)\approx 1.2$. Moreover, since $\rho(\sigma)>0$ for all $\sigma>0$, NS preserves the rank of any nonzero input, and thus preserves the rank-one structure of each nonzero injection.
\end{corollary}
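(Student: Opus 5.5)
The plan is to work in the SVD of the normalized input and reduce every claim to the scalar polynomial $\rho$. Write $\tilde{\mathbf X}=\mathbf U\Sigma\mathbf V^\top$ with singular values $\sigma_i$.

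\textbf{Step 1: singular values of $\tilde{\mathbf X}$ lie in $[0,1]$.} By Definition~\ref{def:ns5}, $\|\tilde{\mathbf X}\|_F=\|\mathbf X\|_F/\max(\|\mathbf X\|_F,\delta)\le 1$. Each singular value is bounded by the spectral norm, which is at most the Frobenius norm, so $\sigma_i\in[0,1]$.

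\textbf{Step 2: NS acts spectrally through $\rho$.} We have $\tilde{\mathbf X}\tilde{\mathbf X}^\top=\mathbf U\Sigma\Sigma^\top\mathbf U^\top$ and $(\tilde{\mathbf X}\tilde{\mathbf X}^\top)^2=\mathbf U(\Sigma\Sigma^\top)^2\mathbf U^\top$. Substituting into \eqref{eq:ns_main} gives
\[
\mathrm{NS}(\mathbf X)=\mathbf U\bigl(a\Sigma+b\,\Sigma\Sigma^\top\Sigma+c\,(\Sigma\Sigma^\top)^2\Sigma\bigr)\mathbf V^\top=\mathbf U\,\rho(\Sigma)\,\mathbf V^\top ,
\]
where $\rho$ is applied entrywise to the diagonal. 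This is an SVD up to signs. The singular values of $\mathrm{NS}(\mathbf X)$ are therefore $|\rho(\sigma_i)|$, and $\sigma_{\max}(\mathrm{NS}(\mathbf X))\le\sup_{[0,1]}|\rho|$, which we name $1+\varepsilon_u$.

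\textbf{Step 3: analysis of $\rho$ on $[0,1]$.} This is the main, if elementary, obstacle. Write $\rho(\sigma)=\sigma\,p(\sigma^2)$ with $p(u)=a+bu+cu^2$. The discriminant is $b^2-4ac=22.80-27.99<0$ and $c>0$, so $p>0$ for all real $u$. Hence $\rho(\sigma)>0$ for every $\sigma>0$, and in particular $\rho\ge0$ on $[0,1]$.

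For the supremum, solve $\rho'(\sigma)=a+3b\sigma^2+5c\sigma^4=0$, a quadratic in $\sigma^2$. With $5c=10.1575$ and $3b=-14.325$, the roots are $\sigma^2\approx(14.325\pm 7.30)/20.315$, giving $\sigma\approx0.59$ and $\sigma\approx1.03$. The interior maximum is at $\sigma\approx0.59$, where $\rho\approx 2.03-0.98+0.15\approx1.2$. The other critical point lies outside $[0,1]$ (it is a local minimum). At the endpoint, $\rho(1)=a+b+c\approx0.70$, so the supremum is about $1.2$.

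\textbf{Step 4: rank preservation.} By Step 2, the number of nonzero singular values of $\mathrm{NS}(\mathbf X)$ equals the number of $i$ with $\rho(\sigma_i)\neq0$. By Step 3, this is exactly the number of $i$ with $\sigma_i>0$. So $\operatorname{rank}\mathrm{NS}(\mathbf X)=\operatorname{rank}\tilde{\mathbf X}=\operatorname{rank}\mathbf X$, since scaling by a positive scalar preserves rank.

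For a nonzero rank-one injection $\tau\beta_t\mathbf v_t\mathbf k_t^\top$, the output is $\rho(\tilde\sigma)\,\hat{\mathbf v}_t\hat{\mathbf k}_t^\top$, where $\hat{\mathbf v}_t,\hat{\mathbf k}_t$ are the unit vectors along $\mathbf v_t,\mathbf k_t$. This is again rank one. When $\|\mathbf X\|_F\ge\delta$, we have $\tilde\sigma=1$, and the output has singular value $\rho(1)\approx0.70$.
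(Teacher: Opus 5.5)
Your proof is correct and follows the paper's argument essentially step for step: the SVD reduction to the scalar map $\rho$, the critical-point analysis of $\rho'$ as a quadratic in $\sigma^2$, the negative discriminant of $a+bu+cu^2$ to get $\rho(\sigma)>0$ for $\sigma>0$, and rank preservation from that positivity. You do have one arithmetic slip, which does not affect your conclusion. The discriminant of $10.1575u^2-14.325u+3.4445$ is about $65.26$, so its square root is about $8.08$, not $7.30$. This puts the critical points at $\sigma^*\approx 0.5545$, where $\rho\approx 1.202$, and at $\sigma\approx 1.050$. Since $\rho(0.59)\approx 1.197$ and the second critical point still lies outside $[0,1]$, your claim $\sup_{[0,1]}\rho\approx 1.2$ still holds.
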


Beyond the forward-pass bound of Corollary~\ref{cor:spectral}, the NS operator exerts a structurally distinct backward geometry that drives rank enrichment beyond what Frobenius normalization achieves.

\begin{proposition}[Backward Geometry of Newton--Schulz Normalization]
\label{prop:ns_jacobian}
Let $\mathbf{X}_0 = \mathbf{u}\mathbf{w}^\top \in \mathbb{R}^{d \times m}$ be rank-one with $\|\mathbf{u}\|_2 = \|\mathbf{w}\|_2 = 1$. Denote by $\mathbf{u}_\perp$ and $\mathbf{w}_\perp$ any vectors orthogonal to $\mathbf{u}$ and $\mathbf{w}$ respectively, so that $\{\mathbf{u}\mathbf{w}^\top,\,    \mathbf{u}_\perp\mathbf{w}^\top,\,    \mathbf{u}\mathbf{w}_\perp^\top,\,    \mathbf{u}_\perp\mathbf{w}_\perp^\top\}$ partitions $\mathbb{R}^{d \times m}$ into four orthogonal subspaces of dimensions $1$, $d-1$, $m-1$, and $(d-1)(m-1)$, respectively.

The Jacobian $D\mathcal{G}_{\mathbf{X}_0}$ of $\mathrm{NS}(X)$ (Definition~\ref{def:ns5}) has the following eigenvalue families:
\begin{equation}
  \lambda =
  \begin{cases}
    0
      & \text{direction } \mathbf{u}\mathbf{w}^{\top} \\
    a + b + c
      & \text{directions }
        \mathbf{u}_\perp\mathbf{w}^{\top},\;
        \mathbf{u}\mathbf{w}_\perp^{\top} \\
    a
      & \text{directions }
        \mathbf{u}_\perp\mathbf{w}_\perp^{\top}
  \end{cases}
\end{equation}
The total dimension of each family is $1$, $d+m-2$, and $(d-1)(m-1)$, respectively. For $(a,b,c) = (3.4445,-4.7750,2.0315)$, these evaluate to $0, a+b+c \approx 0.701, a \approx 3.4445.$ 

In contrast, Frobenius normalization $\mathbf{X} \mapsto \mathbf{X}/\|\mathbf{X}\|_F$ has eigenvalue $0$ in the radial direction $\mathbf{u}\mathbf{w}^\top$ and eigenvalue $1$ in every other direction.
\end{proposition}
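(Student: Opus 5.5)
The plan is a direct chain-rule computation. Write $\mathrm{NS}=\mathcal{P}\circ\mathcal{N}$, where $\mathcal{N}(\mathbf{X})=\mathbf{X}/\max(\|\mathbf{X}\|_F,\delta)$ and
\[
\mathcal{P}(\mathbf{Y})=a\mathbf{Y}+b\,\mathbf{Y}\mathbf{Y}^\top\mathbf{Y}+c\,\mathbf{Y}\mathbf{Y}^\top\mathbf{Y}\mathbf{Y}^\top\mathbf{Y}.
\]
We assume $\delta<1$, which is the regime of interest. Since $\|\mathbf{X}_0\|_F=\|\mathbf{u}\|\|\mathbf{w}\|=1>\delta$, the map $\mathcal{N}$ is smooth near $\mathbf{X}_0$ and fixes it, so $\mathbf{Y}_0=\mathbf{X}_0$. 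Then
\[
D\mathcal{G}_{\mathbf{X}_0}=D\mathcal{P}_{\mathbf{Y}_0}\circ D\mathcal{N}_{\mathbf{X}_0}.
\]
The same first step also gives the Frobenius comparison. Differentiating $\mathbf{X}/\|\mathbf{X}\|_F$ gives
\[
D\mathcal{N}_{\mathbf{X}_0}[\mathbf{H}]=\mathbf{H}-\langle \mathbf{X}_0,\mathbf{H}\rangle_F\,\mathbf{X}_0,
\]
which is the orthogonal projector onto $(\mathbf{u}\mathbf{w}^\top)^\perp$. It therefore has eigenvalue $0$ on $\mathbf{u}\mathbf{w}^\top$ and eigenvalue $1$ on the other three subspaces, which is the final sentence of the statement.

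Next we compute $D\mathcal{P}_{\mathbf{Y}_0}[\mathbf{H}]$ by the product rule:
\[
D\mathcal{P}_{\mathbf{Y}_0}[\mathbf{H}]=a\mathbf{H}+b\bigl(\mathbf{H}\mathbf{Y}_0^\top\mathbf{Y}_0+\mathbf{Y}_0\mathbf{H}^\top\mathbf{Y}_0+\mathbf{Y}_0\mathbf{Y}_0^\top\mathbf{H}\bigr)+c\,(\text{five terms}).
\]
Each of the five $c$-terms is the word $\mathbf{Y}\mathbf{Y}^\top\mathbf{Y}\mathbf{Y}^\top\mathbf{Y}$ with exactly one factor replaced by $\mathbf{H}$ or $\mathbf{H}^\top$. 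We simplify using $\mathbf{Y}_0^\top\mathbf{Y}_0=\mathbf{w}\mathbf{w}^\top$ and $\mathbf{Y}_0\mathbf{Y}_0^\top=\mathbf{u}\mathbf{u}^\top$, and evaluate $D\mathcal{P}_{\mathbf{Y}_0}$ on a basis element $\mathbf{H}=\mathbf{p}\mathbf{q}^\top$ of each family, with $\mathbf{p}\in\{\mathbf{u},\mathbf{u}_\perp\}$ and $\mathbf{q}\in\{\mathbf{w},\mathbf{w}_\perp\}$.

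The key observation concerns terms where $\mathbf{H}$ sits in an interior slot. Every such term contains a factor $\mathbf{Y}_0^\top\mathbf{H}$, $\mathbf{H}\mathbf{Y}_0^\top$, $\mathbf{H}^\top\mathbf{Y}_0$ or $\mathbf{Y}_0\mathbf{H}^\top$. These involve $\mathbf{u}^\top\mathbf{p}$ or $\mathbf{q}^\top\mathbf{w}$, and at least one of these vanishes unless $\mathbf{H}$ is radial. Only the two end slots then survive.

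For $\mathbf{H}=\mathbf{u}_\perp\mathbf{w}^\top$:
\begin{itemize}
\item the right-end terms $\mathbf{H}(\mathbf{w}\mathbf{w}^\top)^k=\mathbf{H}$ survive;
\item the left-end terms $(\mathbf{u}\mathbf{u}^\top)^k\mathbf{H}=0$ vanish.
\end{itemize}
This gives $(a+b+c)\mathbf{H}$. The direction $\mathbf{u}\mathbf{w}_\perp^\top$ is symmetric, with the left end surviving, and also gives $a+b+c$. For $\mathbf{u}_\perp\mathbf{w}_\perp^\top$ both ends vanish, leaving $a\mathbf{H}$.

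For the radial direction there is no need to evaluate $D\mathcal{P}$, which would give $a+3b+5c$. The projector $D\mathcal{N}$ already sends $\mathbf{u}\mathbf{w}^\top$ to $0$, so its eigenvalue is $0$.

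Because each subspace is invariant under both factors, the composition is diagonal in this decomposition. Counting basis vectors gives the stated multiplicities $1$, $d+m-2$ and $(d-1)(m-1)$. The numerical values $a+b+c\approx0.701$ and $a\approx3.4445$ then follow by substitution.

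The main obstacle is the bookkeeping of the five quintic product-rule terms and checking that every interior-slot term vanishes. We will organize this by the slot in which $\mathbf{H}$ appears and verify one representative term of each type.

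A secondary point is the hypothesis $\|\mathbf{X}_0\|_F>\delta$, which needs $\delta<1$. For $\delta\ge 1$ the normalization is the constant scaling $1/\delta$, and the radial eigenvalue would not be $0$. We will state this assumption explicitly.
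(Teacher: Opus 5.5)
Your proposal is correct and is essentially the paper's argument. Both differentiate the Frobenius normalization to get the projector $\mathbf{H}\mapsto\mathbf{H}-\langle\mathbf{X}_0,\mathbf{H}\rangle_F\mathbf{X}_0$, apply the product rule to the polynomial part at $\mathbf{X}_0=\mathbf{u}\mathbf{w}^\top$, and read off the eigenvalues on the four subspaces. The only difference is that the paper first assembles a closed-form $D\mathcal{G}_{\mathbf{X}_0}[\mathbf{H}]$ before specializing, whereas you specialize immediately and let the projector dispose of the radial direction.

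One small tightening of your ``key observation'' is needed. Each interior-slot term has $\mathbf{H}$ (or $\mathbf{H}^\top$) flanked by $\mathbf{Y}_0$-factors on both sides, so it carries the scalar $(\mathbf{u}^\top\mathbf{p})(\mathbf{w}^\top\mathbf{q})$. That product is what kills the term for every non-radial $\mathbf{H}$. A single factor such as $\mathbf{Y}_0^\top\mathbf{H}$ need not vanish, e.g.\ for $\mathbf{H}=\mathbf{u}\mathbf{w}_\perp^\top$.
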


\begin{remark}[Implication for Rank Enrichment]
\label{rem:jacobian_rank}
Proposition~\ref{prop:ns_jacobian} shows that NS and Frobenius normalization have different backward geometries. While both remove the radial direction $\mathbf{u}\mathbf{w}^\top$, Frobenius normalization leaves all tangent directions unchanged, whereas NS amplifies the fully orthogonal subspace $\mathbf{u}_\perp\mathbf{w}_\perp^\top$ by a factor $a\approx 3.4445$ and scales the mixed directions by $a+b+c\approx 0.701$. Thus, NS biases the backward signal toward directions orthogonal to the current rank-one write. When accumulated through the momentum recurrence, this provides a mechanism for producing less collinear updates and increasing the effective rank of $\mathbf{M}_t$, consistent with the ablation results in ~\hyperref[app:rank_enrichment_empirical]{Appendix C}.
\end{remark}

\subsection{Rank Enrichment}
Finally, we analyze how momentum accumulation affects the representational capacity of the memory state. While each individual update is rank-1, their exponentially weighted accumulation leads to a progressively richer memory structure.




\begin{proposition}[Rank Enrichment via Momentum Accumulation]
\label{prop:rank_accumulation}
Assume $\mathbf{M}_0=\mathbf{0}$. The momentum state at time $t$ admits the expansion
\begin{equation}
\mathbf{M}_t
=
\sum_{s=1}^{t}
\gamma^{t-s}\,
\mathrm{NS}\!\left(\tau\beta_s\mathbf{v}_s\mathbf{k}_s^\top\right).
\label{eq:momentum_expansion}
\end{equation}
Consequently,
\begin{equation}
\mathrm{rank}(\mathbf{M}_t)
\leq
\min(t,d,m).
\label{eq:rank_upper_bound}
\end{equation}

Moreover, this upper bound is attainable and generically tight: under non-degenerate updates $(\tau>0,\,\beta_s>0,\,\mathbf{v}_s\neq\mathbf{0},\, \mathbf{k}_s\neq\mathbf{0})$, the set of direction pairs $\{(\mathbf{v}_s,\mathbf{k}_s)\}_{s=1}^{t}$ for which $\mathrm{rank}(\mathbf{M}_t) < \min(t,d,m)$ has measure zero with respect to the product surface measure on
$(S^{d-1}\times S^{m-1})^t$.

\end{proposition}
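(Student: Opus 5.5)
The plan has three parts: the expansion, the rank upper bound, and generic attainment.

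\textbf{Expansion.} I would unroll the momentum recurrence \eqref{eq:momentum_update} by induction on $t$. With $\mathbf{M}_0=\mathbf{0}$, the base case is $\mathbf{M}_1=\mathrm{NS}(\tau\beta_1\mathbf{v}_1\mathbf{k}_1^\top)$. If \eqref{eq:momentum_expansion} holds at $t-1$, multiplying by $\gamma$ and adding the new injection gives it at $t$.

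\textbf{Upper bound.} I would first show that each summand $\mathbf{U}_s=\mathrm{NS}(\tau\beta_s\mathbf{v}_s\mathbf{k}_s^\top)$ has rank at most one. Writing $\tilde{\mathbf X}_s=\lambda_s\mathbf{v}_s\mathbf{k}_s^\top$ with $\lambda_s\ge 0$, we have $\tilde{\mathbf X}_s\tilde{\mathbf X}_s^\top=\lambda_s^2\|\mathbf{k}_s\|^2\mathbf{v}_s\mathbf{v}_s^\top$. Then \eqref{eq:ns_main} gives
\[
\mathbf{U}_s=\frac{\rho(\sigma_s)}{\sigma_s}\,\tilde{\mathbf X}_s
=\rho(\sigma_s)\,\hat{\mathbf{v}}_s\hat{\mathbf{k}}_s^\top,
\]
where $\sigma_s$ is the singular value of $\tilde{\mathbf X}_s$ and hats denote unit vectors. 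This is exactly the singular-value action in Corollary~\ref{cor:spectral}. Subadditivity of rank then gives $\mathrm{rank}(\mathbf{M}_t)\le t$, and trivially $\mathrm{rank}(\mathbf{M}_t)\le\min(d,m)$.

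\textbf{Generic tightness.} Under non-degeneracy, $\sigma_s>0$, so $c_s:=\gamma^{t-s}\rho(\sigma_s)>0$ by Corollary~\ref{cor:spectral}. Hence
\[
\mathbf{M}_t=\mathbf{V}\,\mathrm{diag}(c_1,\dots,c_t)\,\mathbf{K}^\top,
\]
where $\mathbf{V}=[\hat{\mathbf v}_1,\dots,\hat{\mathbf v}_t]\in\mathbb{R}^{d\times t}$ and $\mathbf{K}=[\hat{\mathbf k}_1,\dots,\hat{\mathbf k}_t]\in\mathbb{R}^{m\times t}$.

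The direction pairs range over $(S^{d-1}\times S^{m-1})^t$. The scalars $c_s$ also depend on $\|\mathbf{v}_s\|,\|\mathbf{k}_s\|,\beta_s$, but they stay positive, so I treat them as fixed positive numbers and show the bad set is null for every such choice. Set $r=\min(t,d,m)$. I would show that some $r\times r$ minor of $\mathbf{M}_t$ is nonzero off a null set.

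This minor is a polynomial in the coordinates of the $\hat{\mathbf v}_s,\hat{\mathbf k}_s$. By Cauchy--Binet, the minor on rows $I$ and columns $J$ equals
\[
\sum_{|S|=r}\det\mathbf{V}_{I,S}\,\Bigl(\prod_{s\in S}c_s\Bigr)\det\mathbf{K}_{J,S}.
\]
To show it is not identically zero, I would exhibit one configuration where it is nonzero. For $s\le r$, take $\hat{\mathbf v}_s=e_s$ and $\hat{\mathbf k}_s=e_s$. For $s>r$, take $\hat{\mathbf v}_s=e_1$ and $\hat{\mathbf k}_s=e_1$. Then $\mathbf{M}_t$ has leading block $\mathrm{diag}(c_1+\sum_{s>r}c_s,\,c_2,\dots,c_r)$, which is invertible.

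A nonzero polynomial restricted to a product of spheres (a connected real-analytic manifold) that is not identically zero there has a zero set of measure zero. The analyticity is routine but needs care, so I would use the standard lemma that the zero set of a non-identically-vanishing real-analytic function on a connected analytic manifold is null. The witness configuration above shows the restriction is indeed not identically zero. Since $\{\mathrm{rank}<r\}$ is contained in the zero set of this single minor, it is null. This also proves attainability.

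\textbf{Main obstacle.} The delicate step is this measure-zero argument. It requires restricting to spheres rather than Euclidean space, and handling the dependence of $c_s$ on magnitudes so that positivity, rather than specific values, is all that is used. The witness configuration above settles both points.
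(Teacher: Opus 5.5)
Your proposal is correct and follows essentially the same route as the paper: unroll the recurrence, write each injection as $\mathrm{NS}(\mathbf{X}_s)=\rho(\hat\sigma_s)\hat{\mathbf v}_s\hat{\mathbf k}_s^\top$ and use subadditivity, freeze the positive weights $\gamma^{t-s}\rho(\hat\sigma_s)$, and use the same witness configuration (orthonormal directions for $s\le r$, repeats of the first for $s>r$) to show that some $r\times r$ minor is not identically zero on the product of spheres. Your appeal to the zero-set lemma for real-analytic functions on a connected manifold is a slightly more careful version of the paper's ``proper algebraic subset has measure zero'' step, though connectedness, like the paper's argument, tacitly requires $d,m\ge 2$; the Cauchy--Binet expansion is not actually needed.
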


While momentum accumulation allows the memory to reach a rank up to $\min(t, d, m)$, rank alone is insufficient to characterize representational capacity. We therefore consider the \emph{effective rank}
\begin{equation}
  r_{\mathrm{eff}}(\mathbf{M}_t)
  =
  \frac{\bigl(\sum_i \sigma_i(\mathbf{M}_t)\bigr)^2}
       {\sum_i \sigma_i(\mathbf{M}_t)^2},
\end{equation}
which measures how uniformly the singular values of $\mathbf{M}_t$ are distributed. A higher effective rank indicates that the momentum state uses more independent representational directions, reducing the risk that repeated rank-one writes collapse into a small subspace. Together with Proposition~\ref{prop:ns_jacobian}, this suggests that the backward geometry of NS can encourage less collinear future writes, while momentum accumulation integrates these writes over time. This mechanism is consistent with the three-way ablation in  ~\hyperref[app:rank_enrichment_empirical]{Appendix C}, where replacing NS with Frobenius normalization reduces the effective rank and downstream accuracy.

We defer all proofs to ~\hyperref[app:proofs]{Appendix A}.

%% file: content/exp.tex
\begin{table*}[t]
\centering
\caption{Zero-shot performance on common sense reasoning and language modeling tasks. All models are pretrained on FineWeb-Edu10B tokens. We compare original SSM backbones with our \textbf{Muon}-integrated versions. $\downarrow$: Lower is better, $\uparrow$: Higher is better. \textbf{Bold} indicates the best result per architecture.}
\label{tab:lm_reasoning}
\resizebox{0.95\textwidth}{!}{%
\begin{tabular}{l c | cc | c cccccc c | c}
\toprule
\multirow{2}{*}{\textbf{Architecture}} & \textbf{Memory} & \textbf{Wiki.} & \textbf{LMB.} & \textbf{LMB.} & \textbf{PIQA} & \textbf{Hella.} & \textbf{Wino.} & \textbf{ARC-e} & \textbf{ARC-c} & \textbf{SIQA} & \textbf{BoolQ} & \textbf{Avg.} \\
 & \textbf{Algorithm} & ppl $\downarrow$ & ppl $\downarrow$ & acc $\uparrow$ & acc $\uparrow$ & acc\_n $\uparrow$ & acc $\uparrow$ & acc $\uparrow$ & acc\_n $\uparrow$ & acc $\uparrow$ & acc $\uparrow$ & acc $\uparrow$ \\ 
\midrule

\multirow{2}{*}{Mamba} 
    & Original & 42.17 & 102.51 & 20.57 & 63.81 & 30.10 & 51.11 & 52.39 & 21.75 & 37.41 & 59.87 & 42.13\\
    & \textbf{+ Muon (Ours)} & \textbf{40.83} & \textbf{89.17} & \textbf{22.84} & 63.47 & \textbf{33.19} & \textbf{53.36} & \textbf{53.21} & \textbf{25.58} & \textbf{38.33} & \textbf{63.82} & \textbf{44.23} \\
\midrule

\multirow{2}{*}{LongHorn} 
    & Original & 43.06 & 96.80 & 22.16 & 62.79 & 29.87 & 52.25 & 50.34 & 21.24 & 36.18 & 55.02 & 41.23 \\
    & \textbf{+ Muon (Ours)} & \textbf{41.71} & \textbf{80.98} & \textbf{24.00} & 62.02 & \textbf{32.85} & \textbf{54.38} & \textbf{51.17} & \textbf{25.12} & \textbf{37.15} & \textbf{59.44} & \textbf{43.27} \\
\midrule

\multirow{2}{*}{Gated DeltaNet} 
    & Original & 39.58 & 97.92 & 21.36 & 62.45 & 30.02 & 51.30 & 51.85 & 21.42 & 36.90 & 55.23 & 41.32 \\
    & \textbf{+ Muon (Ours)} & \textbf{38.12} & \textbf{83.47} & \textbf{23.91} & \textbf{62.88} & \textbf{33.51} & \textbf{53.76} & \textbf{52.74} &  \textbf{23.13} & \textbf{38.02} & \textbf{56.85} & \textbf{43.10}\\

\bottomrule
\end{tabular}%
}
\end{table*}

\begin{table}[t]
\centering
\caption{
Needle-in-a-Haystack retrieval performance under varying context lengths. We evaluate three variants: PassKey (PK), Number (N), and UUID. Higher values indicate better accuracy.}
\label{tab:unified_downstream_}
\setlength{\tabcolsep}{4pt}
\resizebox{\columnwidth}{!}{%
\begin{tabular}{l c ccc ccc ccc}
\toprule
\multirow{2}{*}{\textbf{Architecture}} & \textbf{Memory} & \multicolumn{3}{c}{\textbf{S-NIAH-PK}} & \multicolumn{3}{c}{\textbf{S-NIAH-N}} & \multicolumn{3}{c}{\textbf{S-NIAH-UUID}} \\
\cmidrule(lr){3-5} \cmidrule(lr){6-8} \cmidrule(lr){9-11}
 & \textbf{Algorithm} & 2K & 4K & 8K & 2K & 4K & 8K & 2K & 4K & 8K \\
\midrule

\multirow{2}{*}{Mamba} & Original & 29.3 & 16.4 & 8.8 & 18.6 & 14.3 & 4.1 & 48.6 & 32.9 & 25.0 \\
 & \textbf{+ Muon} & \textbf{32.1} & \textbf{20.5} & \textbf{15.8} & \textbf{22.4} & \textbf{19.1} & \textbf{10.2} & \textbf{53.8} & \textbf{38.2} & \textbf{31.5}\\
\midrule

\multirow{2}{*}{LongHorn} & Original & \textbf{67.9} & 52.1 & 20.0 & 70.7 & 55.7 & 35.6 & 46.4 & 30.7 & 19.3 \\
 & \textbf{+ Muon} & 66.7 & \textbf{55.9} & \textbf{39.3} & \textbf{75.1} & \textbf{71.4} & \textbf{36.8} & \textbf{52.9} & \textbf{37.9} & 28.6 \\
\midrule

\multirow{2}{*}{GatedDeltaNet} & Original & 61.4 & 43.6 & 25.7 & 69.3 & 43.6 & 27.1 & 52.1 & 35.0 & 24.3 \\
 & \textbf{+ Muon} & \textbf{63.2} & \textbf{48.9} & \textbf{44.5} & \textbf{74.1} & \textbf{57.8} & \textbf{29.4} & \textbf{58.3} & \textbf{42.6} & \textbf{33.1} \\

\bottomrule
\end{tabular}%
}
\end{table}
We present a comprehensive empirical evaluation of MuonSSM across three distinct modalities: language, vision, and time series. Our primary objective is to assess whether the proposed memory-update mechanism translates into tangible performance gains and improved robustness across diverse data distributions. We benchmark MuonSSM against representative state-of-the-art SSM baselines, including Mamba, LongHorn, and Gated DeltaNet. To ensure a controlled comparison, we maintain identical parameter counts, architectural hyperparameters, and training budgets across all experiments, thereby isolating the effect of the MuonSSM update mechanism. All experiments are conducted on four NVIDIA H100 GPUs.

\subsection{Language Modeling and Long-Context Retrieval}
\paragraph{Setup.}
We investigate the capability of MuonSSM to handle discrete sequential data requiring both common-sense reasoning and precise long-range information retrieval. We adopt a 170M parameter configuration to conduct a controlled study on algorithmic efficiency in the small-scale regime. All models are pre-trained from scratch on the FineWeb-Edu 10B dataset \cite{lozhkov2024fineweb-edu}, a high-quality educational corpus chosen to evaluate the model's ability to acquire reasoning capabilities and linguistic structure efficiently under a controlled compute budget. Following pre-training, we perform supervised fine-tuning on the Alpaca-52K dataset \cite{alpaca} to unlock instruction-following capabilities. Crucially, the SFT stage is conducted with a maximum sequence length of 2048 tokens. This constraint allows us to explicitly test the model's ability to extrapolate to longer contexts during evaluation, assessing whether the spectral properties of MuonSSM facilitate length generalization beyond the training horizon.

\paragraph{Results.}
As summarized in Tables~\ref{tab:lm_reasoning} and~\ref{tab:unified_downstream_}, MuonSSM demonstrates superior performance compared to standard SSMs. On common-sense reasoning benchmarks derived from the FineWeb-Edu pre-training, the model achieves lower perplexity, suggesting that spectrally balanced updates facilitate better knowledge compression. More importantly, in the Single Needle in Haystack (S-NIAH) evaluation, MuonSSM maintains high retrieval accuracy across PassKey (PK), Number (N), and UUID tasks up to context lengths of 8K tokens. This result is particularly significant given that the instruction tuning was limited to 2048 tokens. Unlike baseline models which typically exhibit rapid performance degradation when extrapolating beyond their training context window, MuonSSM mitigates long-range gradient attenuation by introducing a momentum pathway and conditioning the geometry of input-dependent memory updates, thereby improving length generalization.

\subsection{Vision Spatial Modeling and Robustness}
\paragraph{Setup.}
Following the MambaVision framework~\cite{11093327}, we evaluate MuonSSM as a drop-in replacement for visual state-space modeling. The architecture employs a hierarchical design where images are processed by a convolutional stem followed by stacked SSM mixers. We replace the standard Mamba mixers with MuonSSM blocks while keeping the hybrid attention layers and patch embedding strategies invariant. We utilize \textit{Tiny} model variants to strictly control the computational budget. We assess performance on three standard tiers of visual understanding: image classification on ImageNet-1K (IN-1K) \cite{imagenet}, object detection on MS COCO \cite{lin2015microsoftcococommonobjects}, and semantic segmentation on ADE20K \cite{ade20k}. Furthermore, to strictly evaluate the resilience of learned representations against distribution shifts and corruptions, we extend our evaluation to three challenging robustness benchmarks: ImageNet Corruption (IN-C) \cite{hendrycks2019robustness}, ImageNet Rendition (IN-R) \cite{hendrycks2021many}, and ImageNet Adversarial (IN-A) \cite{hendrycks2021nae}. All experiments follow the same training schedules and evaluation protocols as in prior MambaVision work.
\paragraph{Results.}
Tables~\ref{tab:muon_comparison} and~\ref{tab:unified_downstream} summarize the results, where MuonSSM yields consistent improvements over the MambaVision baseline across standard classification and dense prediction tasks. Crucially, in the robustness evaluation, MuonSSM demonstrates a significant reduction in Mean Corruption Error on IN-C and higher accuracy on IN-A and IN-R. This enhanced resilience indicates that the spectral whitening effect of MuonSSM prevents the model from overfitting to superficial high-frequency statistics or texture biases common in standard training. Instead, the optimization dynamics encourage the learning of invariant features robust to distribution shifts, proving beneficial for out-of-distribution generalization.
\begin{table}[htbp]
\centering
\caption{Comparison of SSMs with and without the proposed Muon framework. \textbf{Bold} indicates the best performance per architecture.}
\label{tab:muon_comparison}
\setlength{\tabcolsep}{3pt}
\resizebox{\columnwidth}{!}{%
\begin{tabular}{l c cc c c cc}
\toprule
\multirow{2}{*}{\textbf{Architecture}} & \textbf{Memory} & \multicolumn{2}{c}{\textbf{IN-1K}} & \textbf{IN-R} & \textbf{IN-A} & \multicolumn{2}{c}{\textbf{IN-C}} \\
\cmidrule(lr){3-4} \cmidrule(lr){7-8}
 & \textbf{Algorithm} & Top-1 $\uparrow$ & Top-5 $\uparrow$ & Top-1 $\uparrow$ & Top-1 $\uparrow$ & Top-1 $\uparrow$ & mCE $\downarrow$ \\
\midrule
\multirow{2}{*}{Mamba} & Original & 81.08 & 95.32 & 42.35 & 20.57 & 12.31 & 112.84 \\
 & \textbf{+ Muon} & \textbf{81.19} & \textbf{95.36} & \textbf{42.61} & 20.50 & \textbf{12.57} & \textbf{112.52} \\
\midrule
\multirow{2}{*}{LongHorn} & Original & 81.63 & 95.82 & 45.44 & 23.76 & 13.12 & 111.68 \\
 & \textbf{+ Muon} & \textbf{82.01} & \textbf{95.90} & \textbf{46.28} & \textbf{25.27} & \textbf{13.53} & \textbf{111.24} \\
\midrule
\multirow{2}{*}{GatedDeltaNet} & Original & 79.92 & 95.24 & 41.55 & 19.92 & 11.85 & 114.12 \\
 & \textbf{+ Muon} & \textbf{80.31} & \textbf{95.35} & \textbf{42.18} & \textbf{20.47} & \textbf{12.33} & \textbf{113.56} \\
\bottomrule
\end{tabular}%
}
\end{table}
\begin{table}[htbp]
\centering
\caption{Downstream task performance on COCO2017 (Object Detection \& Instance Segmentation) and ADE20K (Semantic Segmentation). We compare the original backbones against our \textbf{Muon}-integrated versions across different model scales.}
\label{tab:unified_downstream}
\setlength{\tabcolsep}{2pt}
\resizebox{\columnwidth}{!}{%
\begin{tabular}{l c ccc ccc c}
\toprule
\multirow{2}{*}{\textbf{Architecture}} & \textbf{Memory} & \multicolumn{3}{c}{\textbf{Object Detection}} & \multicolumn{3}{c}{\textbf{Instance Seg.}} & \multicolumn{1}{c}{\textbf{Sem. Seg.}} \\
\cmidrule(lr){3-5} \cmidrule(lr){6-8} \cmidrule(lr){9-9}
 & \textbf{Algorithm} & AP$^\text{box}$ & AP$^\text{box}_{50}$ & AP$^\text{box}_{75}$ & AP$^\text{mask}$ & AP$^\text{mask}_{50}$ & AP$^\text{mask}_{75}$ & mIoU \\
\midrule

\multirow{2}{*}{Mamba} & Original & 50.8 & 69.5 & 55.2 & 44.1 & 67.0 & 47.9 & 43.9 \\
 & \textbf{+ Muon} & \textbf{51.1} & \textbf{69.9} & \textbf{55.4} & \textbf{44.3} & \textbf{67.4} & \textbf{48.2} & \textbf{45.2} \\
\midrule
\multirow{2}{*}{LongHorn} & Original & 50.6 & 69.3 & 55.3 & 44.0 & 66.7 & 47.6 & 44.2\\
 & \textbf{+ Muon} & \textbf{51.0} & \textbf{69.8} & \textbf{55.4} & \textbf{44.1} & \textbf{67.1} & \textbf{47.6} & \textbf{45.7}\\
\midrule

\multirow{2}{*}{GatedDeltaNet} & Original & 49.5 & 68.1 & 53.8 & 43.4 & 67.2 & 46.8 & 41.2\\
 & \textbf{+ Muon} & \textbf{50.1}& \textbf{68.8} & \textbf{54.5} & \textbf{43.8} & \textbf{67.8} & \textbf{47.3} & \textbf{41.8}\\

\bottomrule
\end{tabular}%
}
\end{table}

\subsection{Time-Series for Human Activity Recognition}
\begin{table}[ht]
\centering
\caption{Comparison of different architectures with Memory Algorithm column added}
\label{tab:model_comparison}
\resizebox{\columnwidth}{!}{
\begin{tabular}{lccccc}
\toprule
\textbf{Architecture} & \textbf{\begin{tabular}[c]{@{}c@{}}Memory\\ Algorithm\end{tabular}} & \textbf{\begin{tabular}[c]{@{}c@{}}Accuracy\\ (\%)\end{tabular}} & \textbf{\begin{tabular}[c]{@{}c@{}}Precision\\ (\%)\end{tabular}} & \textbf{\begin{tabular}[c]{@{}c@{}}Recall\\ (\%)\end{tabular}} & \textbf{\begin{tabular}[c]{@{}c@{}}F1-score\\ (\%)\end{tabular}} \\ \midrule
\multicolumn{6}{c}{\textbf{MuWiGes}} \\ \midrule
\multirow{2}{*}{Mamba} 
    & Original         & 96.25 & 96.31 & 96.24 & 96.25  \\ 
    & \textbf{+ Muon}  & \textbf{97.64} & \textbf{97.47} & \textbf{97.44} & \textbf{97.45} \\ \midrule

\multirow{2}{*}{LongHorn} 
    & Original         & 97.23 & 97.44 & 97.27 & 97.35  \\ 
    & \textbf{+ Muon}  & \textbf{97.95} & \textbf{97.98} & \textbf{97.95} & \textbf{97.96}  \\ \midrule

\multirow{2}{*}{GatedDeltaNet} 
    & Original         & 96.88 & 96.90 & 96.87 & 96.87 \\ 
    & \textbf{+ Muon}  & \textbf{97.73} & \textbf{97.75} & \textbf{97.72} & \textbf{97.73}  \\ \midrule
\multicolumn{6}{c}{\textbf{UESTC-MMEA-CL}} \\ \midrule
\multirow{2}{*}{Mamba} 
    & Original         & 87.74 & 88.38 & 89.46 & 88.91  \\ 
    & \textbf{+ Muon}  & \textbf{91.62} & \textbf{91.41} & \textbf{90.95} & \textbf{90.96}  \\ \midrule

\multirow{2}{*}{LongHorn} 
    & Original         & 89.06 & 89.88 & 89.44 & 89.43  \\ 
    & \textbf{+ Muon}  & \textbf{91.56} & \textbf{90.94} & \textbf{91.12} & \textbf{91.02} \\ \midrule

\multirow{2}{*}{GatedDeltaNet} 
    & Original         & 86.09 & 87.20 & 86.16 & 85.92 \\ 
    & \textbf{+ Muon}  & \textbf{87.97} & \textbf{88.35} & \textbf{87.82} & \textbf{87.81}  \\ \midrule
\multicolumn{6}{c}{\textbf{MMAct}} \\ \midrule
\multirow{2}{*}{Mamba} 
    & Original         & 71.46 & 71.82 & 71.56 & 71.68  \\ 
    & \textbf{+ Muon}  & \textbf{74.65} & \textbf{78.16} & \textbf{74.06} & \textbf{74.25}  \\ \midrule
\multirow{2}{*}{LongHorn} 
    & Original         & 72.47 & 75.68 & 74.12 & 73.76  \\ 
    & \textbf{+ Muon}  & \textbf{74.40} & \textbf{79.25} & \textbf{76.47} & \textbf{76.43}  \\ \midrule

\multirow{2}{*}{GatedDeltaNet} 
    & Original         & 66.39 & 73.23 & 68.28 & 67.75  \\ 
    & \textbf{+ Muon}  & \textbf{66.61} & \textbf{74.11} & \textbf{69.09} & \textbf{68.73}  \\ \bottomrule
\end{tabular}%
}
\end{table}
\paragraph{Setup.}
We adopt a simple architecture consisting of a Conv1D front-end for local feature extraction, followed by stacked state space blocks and a classification head. Within this architecture, standard SSM blocks are replaced with MuonSSM blocks, while all other components remain unchanged. This design ensures that any observed differences can be attributed to the memory update dynamics introduced by MuonSSM. We evaluate on three widely used benchmarks: MuWiGes \cite{nguyen2023hand} with 12 fine-grained hand gestures, UESTC-MMEA-CL \cite{xu2023towards} with 32 daily activities, and MMAct~\cite{kong2019mmact} with 37 complex actions with high sensor noise, covering diverse levels of activity granularity and temporal complexity.

\paragraph{Results.}
As shown in Table~\ref{tab:model_comparison}, MuonSSM outperforms baselines on all three datasets, with the margin of improvement widening as dataset complexity increases from MuWiGes to MMAct, which involve greater inter-class similarity and motion variability. Notably, these improvements arise despite the fact that the input sequences are not extremely long, suggesting that MuonSSM benefits temporal modeling by stabilizing update dynamics and reducing sensitivity to high-frequency noise, rather than solely by extending effective context length.

%% file: content/analysis.tex

\subsection{Spectral Conditioning and Stability}
Our core hypothesis is that standard first-order SSM updates suffer from spectral degradation over time, leading to ill-conditioned optimization landscapes. To verify this, we visualize the distribution of singular values of the state transition matrix ($d_{state}=64$) for a Mamba backbone trained on the Human Activity Recognition task.

\begin{figure}[htbp]
    \centering
    \includegraphics[width=0.95\columnwidth]{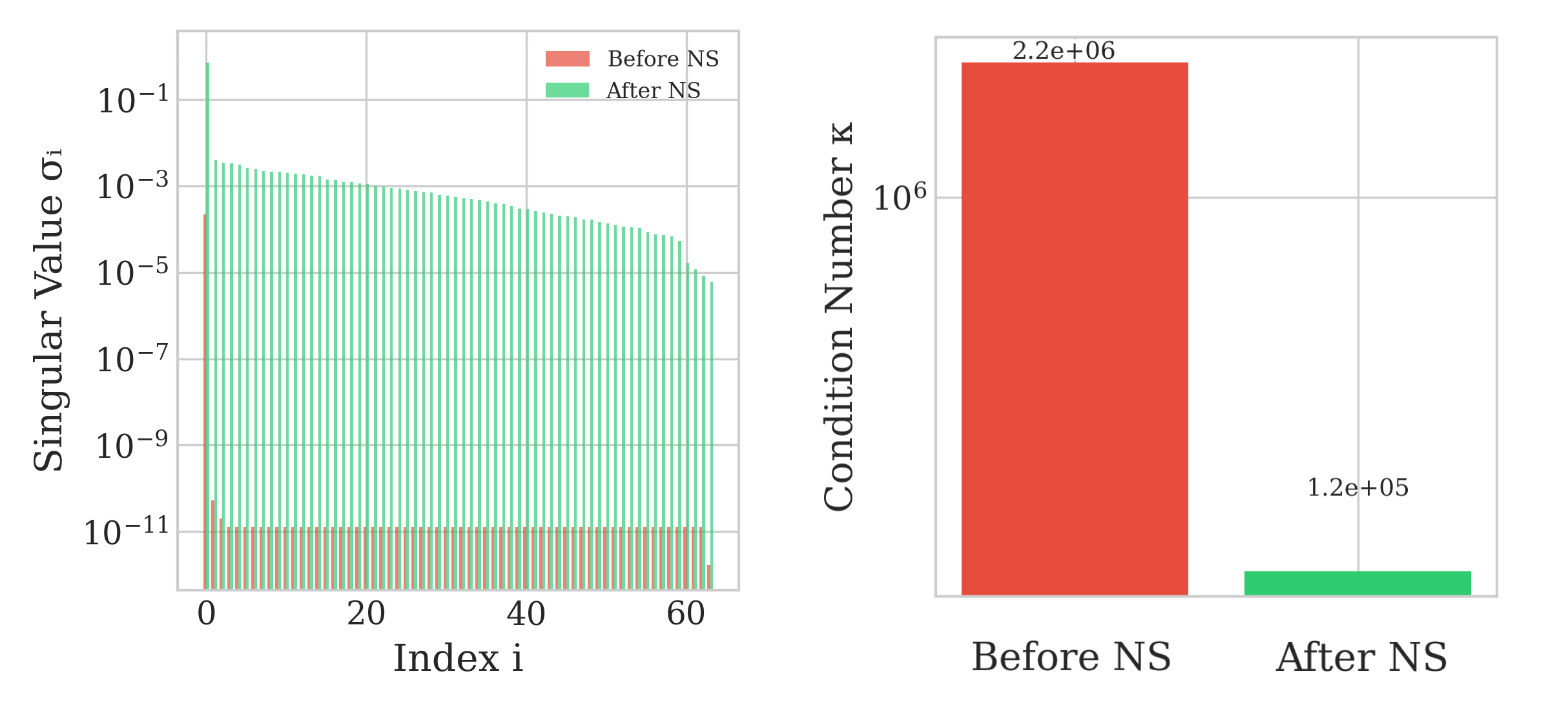}
    \caption{Singular value spectrum of the recurrent state matrix. Standard SSM updates (red) exhibit spectral collapse and a high condition number, while MuonSSM (green) maintains a near-isometric, well-conditioned spectrum via Newton--Schulz normalization.}
    \label{fig:SVD}
\end{figure}

As shown in Figure~\ref{fig:SVD}, the baseline Mamba model (red) exhibits a severe spectral collapse, where a few dominant singular values absorb most of the energy, while the majority decay towards zero. This results in an extremely high condition number ($\kappa \approx 2.2 \times 10^6$), indicating a brittle optimization landscape prone to vanishing gradients. In contrast, applying MuonSSM with Newton--Schulz iteration (green) effectively flattens the spectrum, pushing smaller singular values towards unity. This reduces the condition number by approximately $18\times$ (from $\kappa \approx 2.2 \times 10^6$ to $\kappa \approx 1.2 \times 10^5$). This empirical evidence confirms that our lightweight orthogonalization acts as an effective preconditioner, maintaining a healthy effective rank throughout training without the cost of exact SVD.

\subsection{Optimization Dynamics and Computational Scalability}

\textbf{Convergence Speed.} Figure~\ref{fig:pretraining_curves} compares the pre-training loss on FineWeb-Edu 10B for the baseline LongHorn and our MuonSSM. The results are striking, LongHorn with Muon not only achieves a lower final perplexity but also demonstrates a significantly steeper initial learning trajectory. This suggests that the geometry-aware updates allow the optimizer to escape saddle points more efficiently and navigate the loss landscape with greater stability, effectively accelerating convergence by approximately $1.3\times$ in terms of training steps.

\begin{figure}[t]
    \centering
    \begin{subfigure}[b]{0.48\textwidth}
        \centering
        \includegraphics[width=\linewidth]{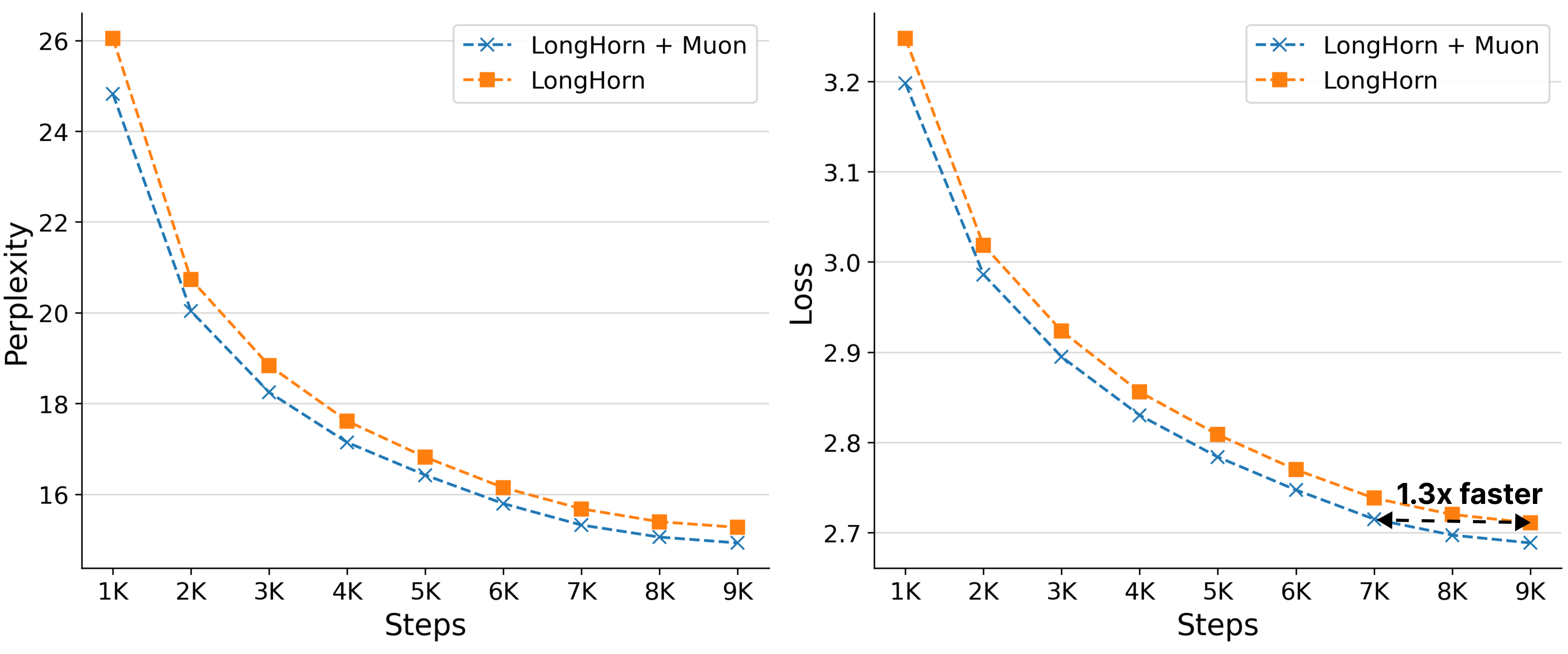}
        \caption{Pre-training dynamics on FineWeb-Edu 10B tokens}
        \label{fig:pretraining_curves}
    \end{subfigure}
    \hfill
    \begin{subfigure}[b]{1\columnwidth}
        \centering
        \includegraphics[width=\linewidth]{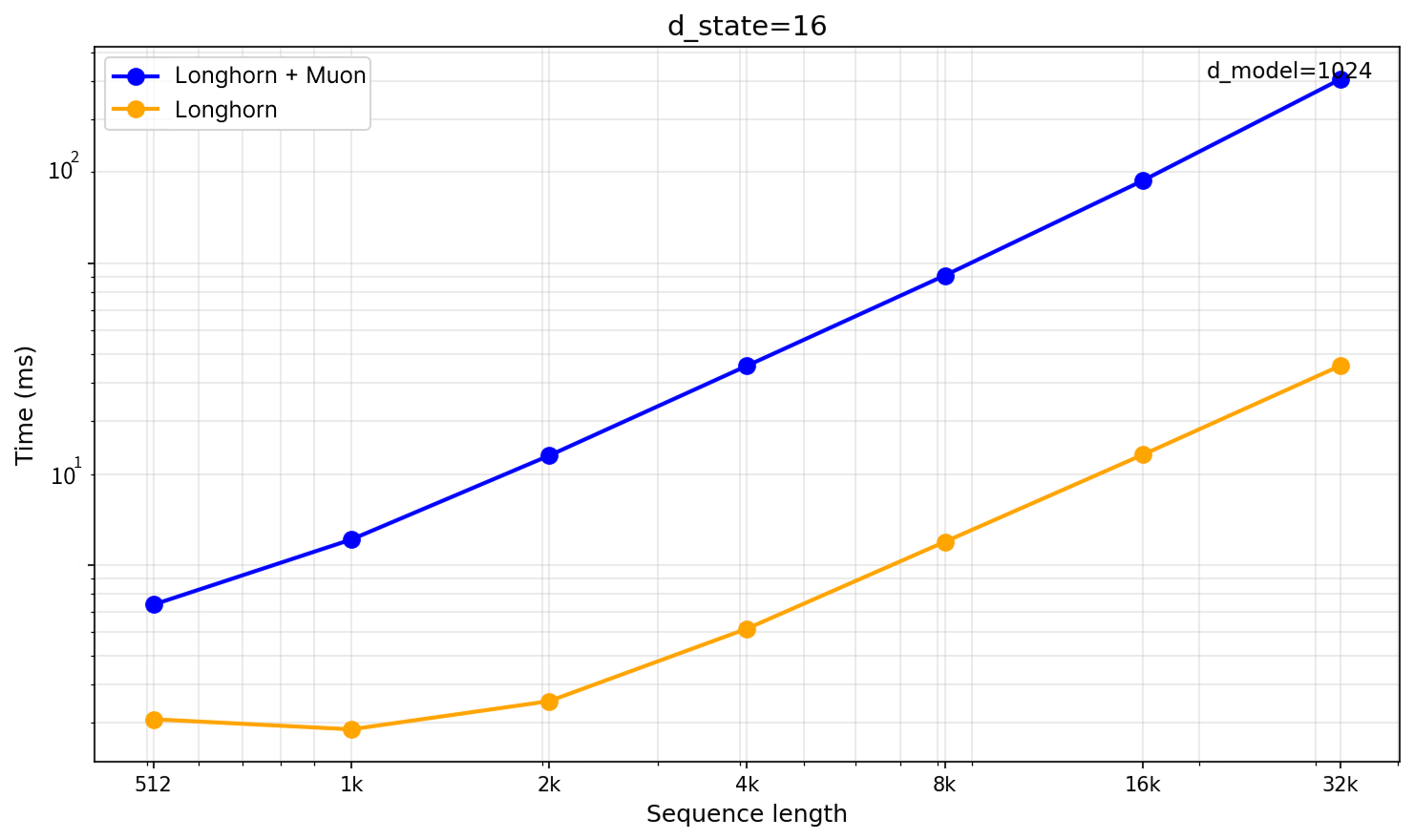}
        \caption{Training time per epoch vs. sequence length.}
        \label{fig:time_complexity}

    \end{subfigure}
    \caption{(a) MuonSSM accelerates convergence $1.3\times$ faster, achieves lower validation loss and perplexity compared to the baseline. (b) The parallel trend lines indicate that MuonSSM preserves the asymptotic linear complexity of the baseline, adding only a constant-factor overhead due to the projection step.}
\end{figure}

\textbf{Computational Scalability.}
Integrating Newton--Schulz iterations introduces additional matrix multiplications, which may affect throughput. To assess this cost, we measure the training time per epoch across varying sequence lengths on the MMAct dataset. As illustrated in Figure~\ref{fig:time_complexity}, LongHorn with Muon incurs a modest constant-factor overhead compared to the baseline LongHorn, while exhibiting a similar scaling trend as sequence length increases. This indicates that MuonSSM preserves the scan-compatible structure of the underlying SSM backbone: the recurrence can still be evaluated with $\mathcal{O}(\log L)$ parallel depth and $\mathcal{O}(L)$ total work over the sequence.

Although the per-step computation is slightly higher, the faster convergence rate means that MuonSSM can reach target performance targets in fewer total steps, making it a highly efficient strategy overall.

\subsection{Capacity vs. Geometric Conditioning}

The augmented state $\mathcal{Z}_t = [S_t\;M_t] \in \mathbb{R}^{d\times 2m}$ (Proposition~\ref{prop:block_affine}) effectively doubles the memory dimension, raising the question of whether the observed gains can be attributed solely to increased capacity.

To isolate this effect, we construct \emph{$2{\times}d_{\text{state}}$} baselines for both LongHorn and Mamba by doubling the state dimension while keeping all other components unchanged. All models are trained on MMAct over 5 independent runs. Table~\ref{tab:results} reports accuracy, precision, recall, and F1 scores for these variants.

\begin{table*}[t]
\caption{Comparison with doubled state-dimension baselines on MMAct (5 independent runs, mean $\pm$ std). Best results are in \textbf{bold}.}
\label{tab:results}
\begin{center}
\begin{small}
\begin{tabular*}{\textwidth}{@{\extracolsep{\fill}}l c c c c}
\toprule
\textbf{Architecture} & \textbf{Accuracy} (\%) & \textbf{Precision} (\%) & \textbf{Recall} (\%) & \textbf{F1} (\%) \\
\midrule
LongHorn & $72.47 \pm 0.19$ & $75.68 \pm 0.30$ & $74.12 \pm 0.17$ & $73.76 \pm 0.24$ \\
LongHorn~$2{\times}d_{\text{state}}$ & $72.88 \pm 0.27$ & $78.62 \pm 0.34$ & $74.67 \pm 0.25$ & $74.90 \pm 0.23$ \\
\textbf{MuonLongHorn (Ours)} & $\mathbf{74.40 \pm 0.21}$ & $\mathbf{79.25 \pm 0.26}$ & $\mathbf{76.47 \pm 0.27}$ & $\mathbf{76.43 \pm 0.36}$ \\
\midrule
Mamba & $71.47 \pm 0.25$ & $71.82 \pm 0.38$ & $71.56 \pm 0.28$ & $71.68 \pm 0.33$ \\
Mamba~$2{\times}d_{\text{state}}$ & $72.52 \pm 0.22$ & $76.98 \pm 0.34$ & $73.80 \pm 0.25$ & $73.40 \pm 0.18$ \\
\textbf{MuonMamba (Ours)} & $\mathbf{74.65 \pm 0.34}$ & $\mathbf{78.16 \pm 0.29}$ & $\mathbf{74.06 \pm 0.41}$ & $\mathbf{74.25 \pm 0.35}$ \\
\bottomrule
\end{tabular*}
\end{small}
\end{center}
\end{table*}

Doubling $d_{\text{state}}$ yields only modest improvements over the base models ($+0.41\%$ for LongHorn, $+1.05\%$ for Mamba). In contrast, Muon variants outperform their respective $2\times$ baselines by a substantial margin ($+1.52\%$ and $+2.13\%$), indicating that increased capacity alone is insufficient to explain the gains. These results support our hypothesis that the primary benefit of MuonSSM arises from improved optimization geometry, rather than from a simple expansion of the state dimension.

\subsection{Ablation Studies: Impact of Newton--Schulz Iterations.} 
A critical design choice in MuonSSM is the number of Newton--Schulz iterations. In Figure~\ref{fig:ablation_iter}, we compare Mamba baselines against MuonSSM variants with different settings on the MMAct dataset: Momentum Only (No NS), Newton--Schulz one iteration, and Newton--Schulz five iterations. This suggests that enforcing strict via more iterations may be overly rigid, potentially hindering the model's ability to capture necessary non-orthogonal correlations in the data. Based on these results, we adopt Newton--Schulz with one iteration as the default setting, balancing performance with computational efficiency.


\begin{figure}[htbp]
    \centering
    \includegraphics[width=\linewidth]{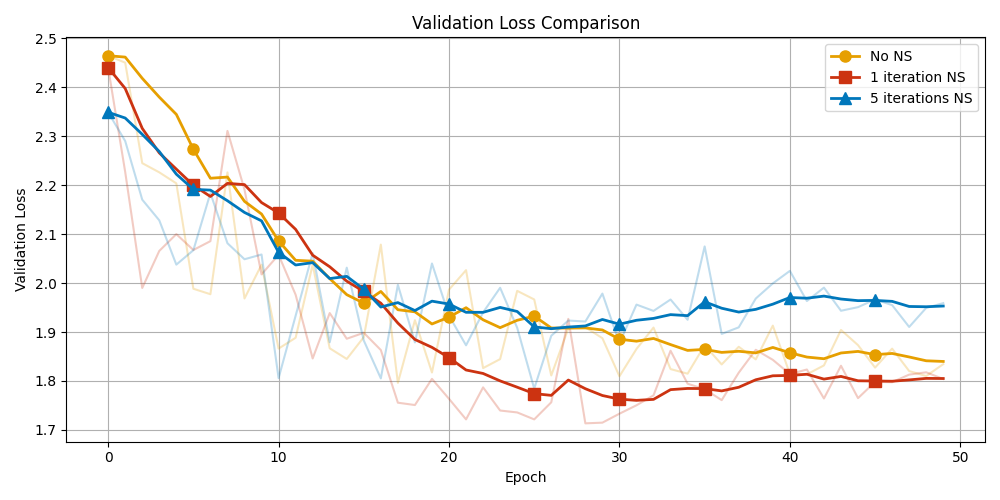}
    \caption{Ablation of Iterations on MMAct dataset}
    \label{fig:ablation_iter}
\end{figure}


\subsection{Robustness and Inductive Bias}
Finally, we investigate \textit{what} features MuonSSM learns compared to standard models. We employ GradCAM \cite{jacobgilpytorchcam} to visualize the focus regions of MambaVision-Tiny models on the IN-R dataset, which contains out-of-distribution examples like art, sketches, and cartoons. Figure~\ref{fig:gradcam} reveals a striking difference in inductive bias.

\noindent\textbf{Texture vs. Shape Bias.} The baseline MambaVision (left) often misclassifies objects based on texture cues. For instance, it misidentifies a sketch of a \textit{Hammerhead Shark} as a \textit{Great White Shark}, likely due to texture ambiguity, and mistakes a stylized \textit{Goldfish} for an \textit{Old English Sheepdog}.

\noindent\textbf{Invariant Representation.} MuonSSM (right) correctly classifies both instances. The activation maps show that MuonSSM focuses more precisely on the structural shape of the distinct head of the hammerhead shark rather than background noise or texture. This suggests that the spectral orthogonalization in MuonSSM encourages the learning of more disentangled and shape-invariant features, leading to the superior robustness observed in our main experiments. 

\begin{figure}[htbp]
\centering
\includegraphics[width=\linewidth]{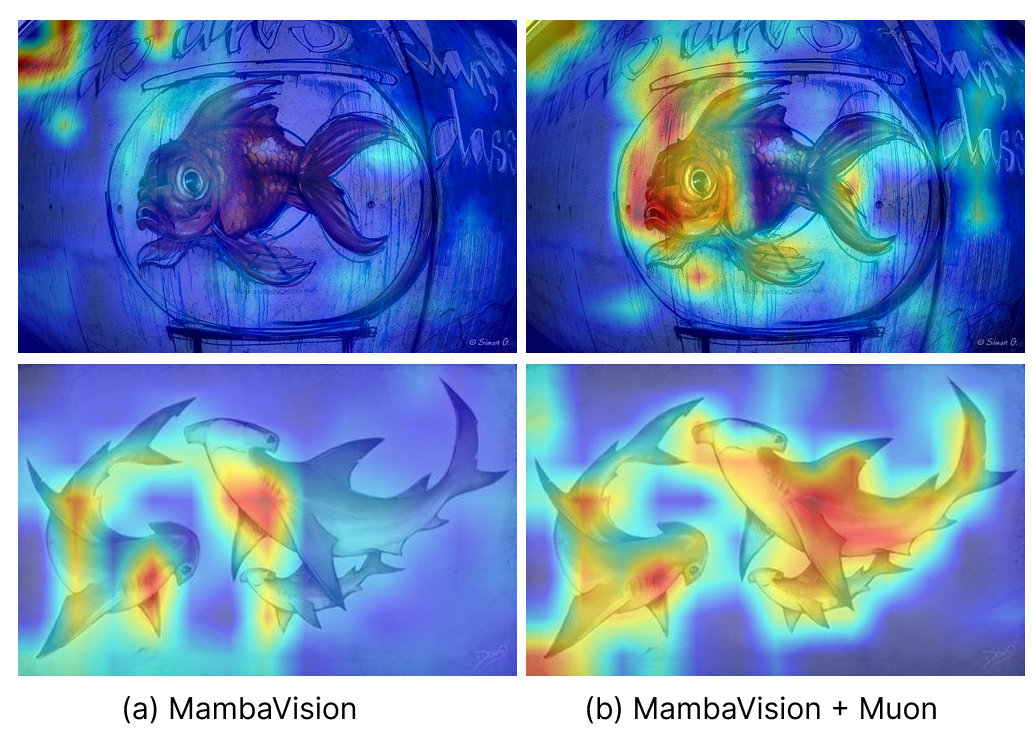}
\caption{GradCAM visualizations show that standard MambaVision (left) is prone to texture bias and confusion under domain shift. MuonSSM (right) demonstrates stronger shape bias, correctly identifying the \textit{Goldfish} and \textit{Hammerhead Shark} by focusing on relevant structural features despite the artistic rendition.}
\label{fig:gradcam}
\end{figure}

%% file: content/related.tex
MuonSSM relates to prior work on state space sequence models, associative memory mechanisms, momentum-based recurrent dynamics, and orthogonalized updates. Unlike approaches that modify the recurrent transition operator, MuonSSM conditions the geometry of input-dependent memory updates while preserving scan-based parallel computation.

\textbf{State Space Models.}
SSMs have emerged as scalable alternatives to attention for long-sequence modeling, beginning with structured models such as S4~\citep{gu2021efficiently} and extending to H3~\citep{fu2022hungry}, S5~\citep{smith2022simplified}, and Mamba~\citep{gu2024mamba,dao2024transformers}. Their foundations are closely related to HiPPO-based memory compression~\citep{gu2020hippo,gu2022train} and subsequent diagonal or parameterized SSM  variants~\citep{gu2022parameterization}. Related linear-time architectures combine recurrence with limited attention or alternative sequence mixing mechanisms, including Mega~\citep{ma2022mega}, Griffin~\citep{de2024griffin}, Jamba~\citep{lenz2025jamba}, RWKV~\citep{peng2023rwkv}, RetNet~\citep{sun2023retentive}, and Hyena~\citep{poli2023hyena}. Across these models, efficiency is typically achieved by structuring the recurrence for parallel scan primitives~\citep{blelloch1990prefix}.

Recent work further connects SSMs, linear attention, and recurrent computation through low-rank associative memory updates~\citep{katharopoulos2020transformers,orvieto2023resurrecting, Behrouz2025ItsAC}. Several models interpret state updates as associative memory rules or online learning dynamics~\citep{Yang2024ParallelizingLT,Yang2024GatedDN, Liu2024LonghornSS,behrouz2025nested,ba2016using,ramsauer2020hopfield, schlag2021linear}, but largely retain first-order update mechanisms. Beyond this regime, LinOSS~\citep{rusch2024oscillatory} derives stable second-order dynamics from forced harmonic oscillators, yielding a linear time-invariant formulation. In contrast, MuonSSM targets selective, input-dependent SSMs, where transitions vary across timesteps and stability is addressed through per-step spectral conditioning of low-rank memory injections.

\textbf{Momentum in Sequence Models.}
Momentum is a classical mechanism for stabilizing optimization and accelerating convergence~\citep{polyak1964some}. It has also been explored in recurrent architectures as a way to extend effective memory and improve stability~\citep{nguyen2020momentumrnn,ma2022mega,teo2024momentumsmoe}. Related perspectives connect recurrent computation with online optimization and test-time memory updates~\citep{zinkevich2003online,Behrouz2025ItsAC,behrouz2024titans}. MuonSSM builds on these ideas by incorporating momentum directly into scan-compatible SSM updates, yielding an additional pathway for information and gradient propagation without breaking the affine recurrence structure.

\textbf{Orthogonalization and Spectral Conditioning.}
Another line of work stabilizes recurrent or deep networks through spectral constraints, unitary or orthogonal parameterizations, and normalization-based control of singular values~\citep{pascanu2013difficulty,arjovsky2016unitary, henaff2016recurrent,vorontsov2017orthogonality,wisdom2016full, helfrich2018orthogonal,miyato2018spectral}. Exact orthogonalization is often costly, motivating lightweight alternatives based on Newton--Schulz iterations and related approximate normalization schemes~\citep{song2022fast,bernstein2024old,jordan2024muon}.Recent optimizers such as Muon~\citep{jordan2024muon} and Dion~\citep{ahn2025dion} demonstrate the practical value of orthonormalized update directions for large-scale training. MuonSSM differs from these optimizer-level methods: it applies lightweight Newton--Schulz-style conditioning directly to input-dependent low-rank memory injections inside scan-based SSMs, rather than orthogonalizing parameter gradients or imposing global constraints on the recurrent transition operator.

%% file: content/conclusion.tex
In this work, we presented MuonSSM, a family of SSMs that stabilizes sequence modeling by conditioning memory updates rather than constraining recurrent transitions. MuonSSM combines a momentum pathway with a lightweight single-step Newton--Schulz transformation on low-rank input injections, yielding bounded and better-conditioned updates while preserving parallel scan complexity. Our analysis shows that this design provides an additional gradient pathway, controls spectral amplification, and encourages richer memory representations. Experiments across language, vision, and time-series tasks demonstrate consistent gains across multiple SSM backbones. While our study focuses on moderate-scale models and fixed conditioning hyperparameters, the results suggest that geometric conditioning of update dynamics is a simple and effective mechanism for stable sequence modeling. Future work will explore larger-scale pretraining, hybrid attention--SSM architectures, and adaptive conditioning strategies.

%% file: content/appendix_3.tex
In this appendix, we provide detailed proofs for all theoretical results stated in Section~\ref{sec:theoretical_analysis}, along with additional lemmas and technical details.

\subsection*{A.1. Proof of Proposition \ref{prop:block_affine} and Remark \ref{rem:parallelizability}: Parallelizability}
\begin{proof}

To establish the parallelizability of MuonSSM, we provide a two-part proof. First, we verify that the proposed block-affine recurrence exactly recovers the coupled memory and momentum dynamics. Second, we prove the associativity of the transition operator, which is the sufficient condition for applying parallel associative scans.

\paragraph{1. Recovery of Coupled Dynamics.} 
Given the augmented state $\mathcal{Z}_t = \begin{bmatrix} \mathbf{S}_t & \mathbf{M}_t \end{bmatrix}$, the block-affine recurrence $\mathcal{Z}_t = \mathcal{Z}_{t-1} \mathbf{\Phi}_t + \mathbf{\Psi}_t$ can be expanded using the definitions of $\mathbf{\Phi}_t$ and $\mathbf{\Psi}_t$:
\begin{equation}
\begin{bmatrix} \mathbf{S}_t & \mathbf{M}_t \end{bmatrix} = 
\begin{bmatrix} \mathbf{S}_{t-1} & \mathbf{M}_{t-1} \end{bmatrix} \begin{bmatrix} \mathbf{D}_t & \mathbf{0} \\ \gamma \mathbf{I} & \gamma \mathbf{I} \end{bmatrix} + 
\begin{bmatrix} \mathbf{U}_t & \mathbf{U}_t \end{bmatrix}.
\end{equation}
Performing the block matrix multiplication yields the following two coupled equations:
\begin{align}
\mathbf{S}_t &= \mathbf{S}_{t-1}\mathbf{D}_t + \gamma \mathbf{M}_{t-1} + \mathbf{U}_t, \label{eq:proof_s_update} \\
\mathbf{M}_t &= \gamma \mathbf{M}_{t-1} + \mathbf{U}_t. \label{eq:proof_m_update}
\end{align}
Substituting Eq.~\eqref{eq:proof_m_update} into Eq.~\eqref{eq:proof_s_update}, we obtain:
\begin{equation}
\mathbf{S}_t = \mathbf{S}_{t-1}\mathbf{D}_t + \mathbf{M}_t.
\end{equation}
This confirms that the block-affine formulation exactly matches the intended MuonSSM momentum-augmented memory update.
\paragraph{2. Associativity and Parallel Complexity.}
For a sequence of affine transformations $f_t(\mathcal{Z}) = \mathcal{Z}\mathbf{\Phi}_t + \mathbf{\Psi}_t$, the composition of two successive steps $f_j(f_i(\mathcal{Z}))$ is:
\begin{align*}
    f_j(f_i(\mathcal{Z})) &= (\mathcal{Z}\mathbf{\Phi}_i + \mathbf{\Psi}_i)\mathbf{\Phi}_j + \mathbf{\Psi}_j \\
    &= \mathcal{Z}(\mathbf{\Phi}_i\mathbf{\Phi}_j) + (\mathbf{\Psi}_i\mathbf{\Phi}_j + \mathbf{\Psi}_j).
\end{align*}
This composition defines the operator $\oplus$:
\[
(\mathbf{\Phi}_i, \mathbf{\Psi}_i) \oplus (\mathbf{\Phi}_j, \mathbf{\Psi}_j) = (\mathbf{\Phi}_i\mathbf{\Phi}_j, \mathbf{\Psi}_i\mathbf{\Phi}_j + \mathbf{\Psi}_j).
\]

To show that $\oplus$ is associative, we consider three elements $a, b, c$ where $a = (\mathbf{\Phi}_1, \mathbf{\Psi}_1)$, $b = (\mathbf{\Phi}_2, \mathbf{\Psi}_2)$, and $c = (\mathbf{\Phi}_3, \mathbf{\Psi}_3)$.

Evaluating $(a \oplus b) \oplus c$:
\begin{align*}
    (a \oplus b) \oplus c
    &= (\mathbf{\Phi}_1\mathbf{\Phi}_2, \mathbf{\Psi}_1\mathbf{\Phi}_2 + \mathbf{\Psi}_2) \oplus (\mathbf{\Phi}_3, \mathbf{\Psi}_3) \\
    &= ((\mathbf{\Phi}_1\mathbf{\Phi}_2)\mathbf{\Phi}_3, (\mathbf{\Psi}_1\mathbf{\Phi}_2 + \mathbf{\Psi}_2)\mathbf{\Phi}_3 + \mathbf{\Psi}_3) \\
    &= (\mathbf{\Phi}_1\mathbf{\Phi}_2\mathbf{\Phi}_3, \mathbf{\Psi}_1\mathbf{\Phi}_2\mathbf{\Phi}_3 + \mathbf{\Psi}_2\mathbf{\Phi}_3 + \mathbf{\Psi}_3).
\end{align*}

Evaluating $a \oplus (b \oplus c)$:
\begin{align*}
    a \oplus (b \oplus c)
    &= (\mathbf{\Phi}_1, \mathbf{\Psi}_1) \oplus (\mathbf{\Phi}_2\mathbf{\Phi}_3, \mathbf{\Psi}_2\mathbf{\Phi}_3 + \mathbf{\Psi}_3) \\
    &= (\mathbf{\Phi}_1(\mathbf{\Phi}_2\mathbf{\Phi}_3), \mathbf{\Psi}_1(\mathbf{\Phi}_2\mathbf{\Phi}_3) + (\mathbf{\Psi}_2\mathbf{\Phi}_3 + \mathbf{\Psi}_3)) \\
    &= (\mathbf{\Phi}_1\mathbf{\Phi}_2\mathbf{\Phi}_3, \mathbf{\Psi}_1\mathbf{\Phi}_2\mathbf{\Phi}_3 + \mathbf{\Psi}_2\mathbf{\Phi}_3 + \mathbf{\Psi}_3).
\end{align*}

Since $(a \oplus b) \oplus c = a \oplus (b \oplus c)$, the operator is associative. By the property of associative scans, any sequence of $L$ elements can be reduced in $\mathcal{O}(\log L)$ time using a binary tree structure. Given that the Newton--Schulz step for $\mathbf{U}_t$ is local to each timestep, the entire sequence $\mathcal{Z}_{1:L}$ is computable in $\mathcal{O}(\log L)$ parallel depth.
\end{proof}

\subsection*{A.2. Proof of Proposition \ref{prop:gradient_momentum}: Gradient Stability}

\begin{proof}
The result is derived by applying the chain rule to the backpropagation through time dynamics. Given the state evolution in row-vector convention $\mathcal{Z}_{t} = \mathcal{Z}_{t-1}\mathbf{\Phi}_{t} + \mathbf{\Psi}_{t}$, we evaluate the gradient of the total loss $\mathcal{L}$ with respect to the state at time $t-1$. Following the standard convention, the gradient relates to the final state at time $T$ via the product of transposed transition matrices:
\begin{equation}
    \frac{\partial \mathcal{L}}{\partial \mathcal{Z}_{t-1}} 
    = \frac{\partial \mathcal{L}}{\partial \mathcal{Z}_T} \left( \prod_{n=T} ^ {t} \frac{\partial \mathcal{Z}_{n}}{\partial \mathcal{Z}_{n-1}} \right) 
    = \frac{\partial \mathcal{L}}{\partial \mathcal{Z}_T} \underbrace{\left( \prod_{n=t}^{T} \mathbf{\Phi}_n \right)^\top}_{\mathbf{\mathcal{J}}^\top_{(t-1) \to T}},
\end{equation}
where $\frac{\partial \mathcal{Z}_{n}}{\partial \mathcal{Z}_{n-1}} = \mathbf{\Phi}_n^\top$. Recalling the block structure from A.1, the transposed matrix is:
\[
    \mathbf{\Phi}_n^\top = 
    \begin{bmatrix} 
    \mathbf{D}_n & \mathbf{0} \\ 
    \gamma \mathbf{I} & \gamma \mathbf{I} 
    \end{bmatrix}^\top = 
    \begin{bmatrix} 
    \mathbf{D}_n^\top & \gamma \mathbf{I} \\ 
    \mathbf{0} & \gamma \mathbf{I} 
    \end{bmatrix}.
\]
The product $\mathbf{\mathcal{J}}^\top_{(t-1) \to T}$ involves multiplying these upper-triangular matrices. Let $\mathbf{\mathcal{J}}^\top_{(t-1) \to T} = \begin{bmatrix} \mathbf{A} & \mathbf{B} \\ \mathbf{0} & \mathbf{C} \end{bmatrix}$, where the blocks are determined by induction over the sequence length $T-t+1$:

\paragraph{1. Diagonal Blocks:} 
The top-left block $\mathbf{A}$ accumulates the sequence of memory transitions: $\mathbf{A} = \prod_{n=T} ^ {t} \mathbf{D}_n^\top$. The bottom-right block $\mathbf{C}$ represents the persistent momentum decay: $\mathbf{C} = \prod_{n=T} ^ {t} \gamma \mathbf{I} = (\gamma \mathbf{I})^{T-t+1}$.

\paragraph{2. Off-diagonal Momentum Block:}
The block $\mathbf{B}$ captures the cross-propagation from the momentum pathway into the memory state. Expanding the product, the top-right block accumulates terms where each momentum injection $\gamma \mathbf{I}$ at step $k$ is subsequently transformed by the memory transitions $\mathbf{D}_j^\top$. This yields:
\[
    \mathbf{B} = \sum_{k=t}^{T} \left( \prod_{j=T}^{k+1} \mathbf{D}_j^\top \right) (\gamma \mathbf{I})^{k-t+1}.
\]

As $\gamma \to 1$, the bottom-right block $\mathbf{C}$ approaches the identity matrix, ensuring that a non-vanishing component of the gradient $\frac{\partial \mathcal{L}}{\partial \mathbf{M}_T}$ is preserved and propagated back to $\mathbf{M}_{t-1}$, regardless of the contractive nature of the memory transitions $\mathbf{D}_n$. Substituting these blocks back into $\mathbf{\mathcal{J}}^\top_{(t-1) \to T}$ completes the proof.
\end{proof}

\subsection*{A.3. Proof of Corollary~\ref{cor:spectral}: Spectral Conditioning}
\begin{proof}
\textbf{1. Spectral Transformation.}
Let $\tilde{X} = X / \max(\|X\|_F, \delta)$, so $\|\tilde{X}\|_F \leq 1$. By the standard inequality $\|\cdot\|_2 \leq \|\cdot\|_F$, all singular values of $\tilde{X}$ satisfy $\sigma_i \in [0, 1]$. Let $\tilde{X} = U\Sigma V^{\!\top}$ be the SVD. Since $\tilde{X}\tilde{X}^{\!\top} = U\Sigma^2 U^{\!\top}$, the NS update satisfies:
\[
    \mathrm{NS}(X)
    = \bigl(a + b\,\tilde{X}\tilde{X}^{\!\top} + c\,(\tilde{X}\tilde{X}^{\!\top})^2
        \bigr)\tilde{X}
    = U\underbrace{(a\Sigma + b\Sigma^3 + c\Sigma^5)}_{=\,\rho(\Sigma)}V^{\!\top},
\]
where $\rho(\sigma) = a\sigma + b\sigma^3 + c\sigma^5$ is applied entry-wise. Thus $\sigma_{\max}(\mathrm{NS}(X)) = \max_i \rho(\sigma_i) \leq \sup_{\sigma \in [0,1]}\rho(\sigma)$, and it suffices to analyze $\rho$ on $[0,1]$.

\textbf{2. Global Extremum on $[0, 1]$.}
The derivative is:
\[
    \rho'(\sigma)
    = 3.4445 - 14.325\,\sigma^2 + 10.1575\,\sigma^4.
\]
Setting $\rho'(\sigma) = 0$ and substituting $u = \sigma^2$ yields:
\[
    10.1575\,u^2 - 14.325\,u + 3.4445 = 0,
    \qquad
    \Delta = 14.325^2 - 4(10.1575)(3.4445) \approx 65.26 > 0.
\]
The two roots are:
\[
    u_{1,2}
    = \frac{14.325 \pm \sqrt{65.26}}{2 \times 10.1575}
    \approx \{0.3075,\; 1.1028\}.
\]
Only $u_1 \approx 0.3075$ lies in $[0,1]$, giving the unique interior critical point $\sigma^* = \sqrt{u_1} \approx 0.5545$. Since $\rho''(\sigma^*) = -28.65\,\sigma^* + 40.63\,(\sigma^*)^3 \approx -8.9593 < 0$, this is a \emph{local maximum}. Evaluating $\rho$ at all candidates:
\begin{align*}
    \rho(0)        &= 0,\\
    \rho(\sigma^*) &= 3.4445(0.5545) - 4.7750(0.5545)^3
                      + 2.0315(0.5545)^5 \approx 1.2,\\
    \rho(1)        &= 3.4445 - 4.7750 + 2.0315 = 0.701
\end{align*}
Therefore:
\[
    \sup_{\sigma\in[0,1]}\rho(\sigma) = \rho(\sigma^*) \approx 1.2,
\]
which gives $\sigma_{\max}(\mathrm{NS}(X)) \lesssim 1.2 = 1 + \varepsilon_u$

\textbf{3. Preservation of rank-1 structure.}
Factoring $\rho(\sigma) = \sigma\cdot q(\sigma^2)$ where $q(u) = cu^2+bu+a$. The discriminant of $q$:

$$\Delta_q = b^2 - 4ac = (4.7750)^2 - 4(3.4445)(2.0315) = 22.801 - 27.985 = -5.189 < 0.$$

Since $\Delta_q < 0$ and $c > 0$, we have $q(u) > 0$ for all $u\in\mathbb{R}$. Therefore:

$$\rho(\sigma) > 0 \;\;\forall\,\sigma > 0, \qquad \rho(0) = 0.$$

It follows that $\mathrm{rank}(\mathrm{NS}(X)) = \mathrm{rank}(\widetilde{X}) = \mathrm{rank}(X)$, preserving rank-1 structure.

\end{proof}

\subsection*{A.4. Proof of Proposition~\ref{prop:ns_jacobian}: Backward Geometry of Newton--Schulz Normalization}

\begin{proof}
We prove the result for the normalized Newton--Schulz map in Definition~\ref{def:ns5}, viewed as a function of the raw input $\mathbf{X}$. Since $\|\mathbf{X}_0\|_F=1$, and in practice $\delta<1$, the normalization is locally given by $\widetilde{\mathbf{X}}=\mathbf{X}/\|\mathbf{X}\|_F$ around $\mathbf{X}_0$.

Write
\[
\mathcal{G}(\mathbf{X})
=
P(\widetilde{\mathbf{X}})\widetilde{\mathbf{X}},
\qquad
P(\mathbf{Z})
=
a\mathbf{I}_d
+
b\mathbf{Z}\mathbf{Z}^\top
+
c(\mathbf{Z}\mathbf{Z}^\top)^2 .
\]
Let $\mathbf{X}_0=\mathbf{u}\mathbf{w}^\top$ with
$\|\mathbf{u}\|_2=\|\mathbf{w}\|_2=1$.

\paragraph{1. Differential of the Frobenius normalization.}
For a perturbation  $\mathbf{H}\in\mathbb{R}^{d\times m}$, the differential of $\widetilde{\mathbf{X}}=\mathbf{X}/\|\mathbf{X}\|_F$ at $\mathbf{X}_0$ is
\begin{equation}
d\widetilde{\mathbf{X}}[\mathbf{H}]
=
\mathbf{H}
-
\langle \mathbf{X}_0,\mathbf{H}\rangle_F\mathbf{X}_0
=
\mathbf{H}
-
(\mathbf{u}^\top\mathbf{H}\mathbf{w})
\mathbf{u}\mathbf{w}^\top .
\label{eq:proof_frob_norm}
\end{equation}
Define
\[
\alpha_{\mathbf{H}}
:=
\mathbf{u}^\top\mathbf{H}\mathbf{w},
\qquad
\mathbf{K}
:=
d\widetilde{\mathbf{X}}[\mathbf{H}]
=
\mathbf{H}
-
\alpha_{\mathbf{H}}\mathbf{u}\mathbf{w}^\top .
\]
Then
\[
\mathbf{u}^\top\mathbf{K}\mathbf{w}
=
\alpha_{\mathbf{H}}-\alpha_{\mathbf{H}}
=
0.
\]

\paragraph{2. Differential of the polynomial part.}
At $\mathbf{X}_0=\mathbf{u}\mathbf{w}^\top$, we have
\[
\widetilde{\mathbf{X}}_0\widetilde{\mathbf{X}}_0^\top
=
\mathbf{u}\mathbf{u}^\top,
\qquad
(\widetilde{\mathbf{X}}_0\widetilde{\mathbf{X}}_0^\top)^2
=
\mathbf{u}\mathbf{u}^\top .
\]
Thus
\[
P(\widetilde{\mathbf{X}}_0)
=
a\mathbf{I}_d
+
(b+c)\mathbf{u}\mathbf{u}^\top .
\]

Let
\[
\mathbf{A}
=
\widetilde{\mathbf{X}}\widetilde{\mathbf{X}}^\top .
\]
The differential of $\mathbf{A}$ at $\widetilde{\mathbf{X}}_0$ in the direction $\mathbf{K}$ is
\[
d\mathbf{A}[\mathbf{K}]
=
\mathbf{K}\mathbf{w}\mathbf{u}^\top
+
\mathbf{u}(\mathbf{K}\mathbf{w})^\top .
\]
Using $\mathbf{u}^\top\mathbf{K}\mathbf{w}=0$, the differential of
$\mathbf{A}^2$ is
\begin{align*}
d(\mathbf{A}^2)[\mathbf{K}]
&=
d\mathbf{A}[\mathbf{K}]\mathbf{u}\mathbf{u}^\top
+
\mathbf{u}\mathbf{u}^\top d\mathbf{A}[\mathbf{K}]
\\
&=
\mathbf{K}\mathbf{w}\mathbf{u}^\top
+
\mathbf{u}(\mathbf{K}\mathbf{w})^\top .
\end{align*}
Therefore,
\[
dP[\mathbf{K}]
=
(b+c)
\left(
\mathbf{K}\mathbf{w}\mathbf{u}^\top
+
\mathbf{u}(\mathbf{K}\mathbf{w})^\top
\right).
\]

\paragraph{3. Full Jacobian.}
By the product rule,
\begin{align}
D\mathcal{G}_{\mathbf{X}_0}[\mathbf{H}]
&=
dP[\mathbf{K}]\,\widetilde{\mathbf{X}}_0
+
P(\widetilde{\mathbf{X}}_0)\mathbf{K}
\nonumber\\
&=
(b+c)
\left(
\mathbf{K}\mathbf{w}\mathbf{u}^\top
+
\mathbf{u}(\mathbf{K}\mathbf{w})^\top
\right)
\mathbf{u}\mathbf{w}^\top
+
a\mathbf{K}
+
(b+c)\mathbf{u}\mathbf{u}^\top\mathbf{K}.
\end{align}
Since
\[
(\mathbf{K}\mathbf{w})^\top\mathbf{u}
=
\mathbf{u}^\top\mathbf{K}\mathbf{w}
=
0,
\]
the term
$\mathbf{u}(\mathbf{K}\mathbf{w})^\top\mathbf{u}\mathbf{w}^\top$
vanishes. Hence
\begin{align}
D\mathcal{G}_{\mathbf{X}_0}[\mathbf{H}]
&=
(b+c)\mathbf{K}\mathbf{w}\mathbf{w}^\top
+
a\mathbf{K}
+
(b+c)\mathbf{u}\mathbf{u}^\top\mathbf{K}.
\label{eq:proof_jacobian_K}
\end{align}
Substituting
$\mathbf{K}=\mathbf{H}-\alpha_{\mathbf{H}}\mathbf{u}\mathbf{w}^\top$
into Eq.~\eqref{eq:proof_jacobian_K} gives
\begin{equation}
D\mathcal{G}_{\mathbf{X}_0}[\mathbf{H}]
=
a\mathbf{H}
+
(b+c)\mathbf{H}\mathbf{w}\mathbf{w}^\top
+
(b+c)\mathbf{u}\mathbf{u}^\top\mathbf{H}
-
\bigl(a+2(b+c)\bigr)
\alpha_{\mathbf{H}}\mathbf{u}\mathbf{w}^\top .
\label{eq:proof_jacobian_full}
\end{equation}

\paragraph{4. Eigenvalue families.}
We evaluate Eq.~\eqref{eq:proof_jacobian_full} on the four orthogonal
subspaces induced by $\mathbf{u}$ and $\mathbf{w}$.

\emph{Radial direction.}
Let $\mathbf{H}=\mathbf{u}\mathbf{w}^\top$. Then
$\alpha_{\mathbf{H}}=1$,
$\mathbf{H}\mathbf{w}\mathbf{w}^\top=\mathbf{H}$, and
$\mathbf{u}\mathbf{u}^\top\mathbf{H}=\mathbf{H}$. Therefore,
\[
D\mathcal{G}_{\mathbf{X}_0}[\mathbf{H}]
=
\bigl(a+(b+c)+(b+c)-a-2(b+c)\bigr)\mathbf{H}
=
0.
\]
Thus $\mathbf{u}\mathbf{w}^\top$ has eigenvalue $0$.

\emph{Directions $\mathbf{u}_\perp\mathbf{w}^\top$.}
Let $\mathbf{H}=\mathbf{p}\mathbf{w}^\top$ with
$\mathbf{p}\perp\mathbf{u}$. Then
$\alpha_{\mathbf{H}}=0$,
$\mathbf{H}\mathbf{w}\mathbf{w}^\top=\mathbf{H}$, and
$\mathbf{u}\mathbf{u}^\top\mathbf{H}=0$. Hence
\[
D\mathcal{G}_{\mathbf{X}_0}[\mathbf{H}]
=
(a+b+c)\mathbf{H}.
\]
Thus these directions have eigenvalue $a+b+c$.

\emph{Directions $\mathbf{u}\mathbf{w}_\perp^\top$.}
Let $\mathbf{H}=\mathbf{u}\mathbf{q}^\top$ with
$\mathbf{q}\perp\mathbf{w}$. Then
$\alpha_{\mathbf{H}}=0$,
$\mathbf{H}\mathbf{w}\mathbf{w}^\top=0$, and
$\mathbf{u}\mathbf{u}^\top\mathbf{H}=\mathbf{H}$. Hence
\[
D\mathcal{G}_{\mathbf{X}_0}[\mathbf{H}]
=
(a+b+c)\mathbf{H}.
\]
Thus these directions also have eigenvalue $a+b+c$.

\emph{Directions $\mathbf{u}_\perp\mathbf{w}_\perp^\top$.}
Let $\mathbf{H}=\mathbf{p}\mathbf{q}^\top$ with
$\mathbf{p}\perp\mathbf{u}$ and $\mathbf{q}\perp\mathbf{w}$. Then
$\alpha_{\mathbf{H}}=0$,
$\mathbf{H}\mathbf{w}\mathbf{w}^\top=0$, and
$\mathbf{u}\mathbf{u}^\top\mathbf{H}=0$. Hence
\[
D\mathcal{G}_{\mathbf{X}_0}[\mathbf{H}]
=
a\mathbf{H}.
\]
Thus these directions have eigenvalue $a$.

The dimensions of the three eigenvalue families are respectively
$1$, $(d-1)+(m-1)=d+m-2$, and $(d-1)(m-1)$, matching the statement of the
proposition.

Finally, for Frobenius normalization, Eq.~\eqref{eq:proof_frob_norm} shows
that the radial direction $\mathbf{u}\mathbf{w}^\top$ is mapped to zero, while
every direction orthogonal to $\mathbf{u}\mathbf{w}^\top$ is unchanged.
Therefore its eigenvalues are $0$ in the radial direction and $1$ on all
tangent directions.
\end{proof}

\subsection*{A.5. Proof of Proposition~\ref{prop:rank_accumulation}: Rank Enrichment}

\begin{proof}
We prove each part separately.

\paragraph{1. Momentum Expansion.}
Expanding the recurrence
\[
\mathbf{M}_t
=
\gamma\mathbf{M}_{t-1}
+
\mathrm{NS}(\tau\beta_t\mathbf{v}_t\mathbf{k}_t^{\top})
\]
with initial condition $\mathbf{M}_0=\mathbf{0}$ gives
\begin{align*}
\mathbf{M}_1
&=
\mathrm{NS}(\tau\beta_1\mathbf{v}_1\mathbf{k}_1^{\top}),\\
\mathbf{M}_2
&=
\gamma\mathbf{M}_1
+
\mathrm{NS}(\tau\beta_2\mathbf{v}_2\mathbf{k}_2^{\top})\\
&=
\gamma\mathrm{NS}(\tau\beta_1\mathbf{v}_1\mathbf{k}_1^{\top})
+
\mathrm{NS}(\tau\beta_2\mathbf{v}_2\mathbf{k}_2^{\top}),\\
\mathbf{M}_3
&=
\gamma\mathbf{M}_2
+
\mathrm{NS}(\tau\beta_3\mathbf{v}_3\mathbf{k}_3^{\top})\\
&=
\gamma^2\mathrm{NS}(\tau\beta_1\mathbf{v}_1\mathbf{k}_1^{\top})
+
\gamma\mathrm{NS}(\tau\beta_2\mathbf{v}_2\mathbf{k}_2^{\top})
+
\mathrm{NS}(\tau\beta_3\mathbf{v}_3\mathbf{k}_3^{\top}).
\end{align*}
Continuing this expansion, or equivalently by induction, we obtain
\[
\mathbf{M}_t
=
\sum_{s=1}^{t}
\gamma^{t-s}
\mathrm{NS}(\tau\beta_s\mathbf{v}_s\mathbf{k}_s^{\top}).
\]
This proves Eq.~\eqref{eq:momentum_expansion}.

\paragraph{2. Rank Upper Bound.}
Each input injection
\[
\mathbf{X}_s
=
\tau\beta_s\mathbf{v}_s\mathbf{k}_s^{\top}
\]
has rank at most one. Moreover, the Newton--Schulz map in Definition~\ref{def:ns5} acts on the singular values of $\mathbf{X}_s$ while preserving its singular directions. Hence
\[
\mathrm{rank}\bigl(\mathrm{NS}(\mathbf{X}_s)\bigr)
\leq
\mathrm{rank}(\mathbf{X}_s)
\leq
1.
\]
Using the sub-additivity of matrix rank,
\[
\mathrm{rank}(\mathbf{M}_t)
\leq
\sum_{s=1}^{t}
\mathrm{rank}
\Bigl(
\mathrm{NS}(\tau\beta_s\mathbf{v}_s\mathbf{k}_s^{\top})
\Bigr)
\leq
t.
\]
Since $\mathbf{M}_t\in\mathbb{R}^{d\times m}$, its rank is also bounded by
$\min(d,m)$. Therefore,
\[
\mathrm{rank}(\mathbf{M}_t)
\leq
\min(t,d,m).
\]
This proves Eq.~\eqref{eq:rank_upper_bound}.

\paragraph{3. Direction-Preserving Form of NS on Rank-1 Inputs.}
We next make explicit how the Newton--Schulz operator acts on each rank-1 update. Under the non-degeneracy assumptions
\[
\tau>0,\qquad
\beta_s>0,\qquad
\mathbf{v}_s\neq\mathbf{0},\qquad
\mathbf{k}_s\neq\mathbf{0},
\]
define
\[
\hat{\mathbf{v}}_s
=
\frac{\mathbf{v}_s}{\|\mathbf{v}_s\|_2},
\qquad
\hat{\mathbf{k}}_s
=
\frac{\mathbf{k}_s}{\|\mathbf{k}_s\|_2},
\]
and
\[
\hat{\sigma}_s
=
\frac{
\tau\beta_s\|\mathbf{v}_s\|_2\|\mathbf{k}_s\|_2
}{
\max\!\left(
\tau\beta_s\|\mathbf{v}_s\|_2\|\mathbf{k}_s\|_2,\delta
\right)
}
\in(0,1].
\]
Then the normalized input satisfies
\[
\widetilde{\mathbf{X}}_s
=
\hat{\sigma}_s
\hat{\mathbf{v}}_s
\hat{\mathbf{k}}_s^{\top}.
\]
Since this matrix is rank-1, we have
\[
\widetilde{\mathbf{X}}_s\widetilde{\mathbf{X}}_s^{\top}
=
\hat{\sigma}_s^2
\hat{\mathbf{v}}_s\hat{\mathbf{v}}_s^{\top},
\qquad
\left(
\widetilde{\mathbf{X}}_s\widetilde{\mathbf{X}}_s^{\top}
\right)^2
=
\hat{\sigma}_s^4
\hat{\mathbf{v}}_s\hat{\mathbf{v}}_s^{\top}.
\]
Substituting these identities into Definition~\ref{def:ns5} gives
\begin{align*}
\mathrm{NS}(\mathbf{X}_s)
&=
\left(
a\mathbf{I}_d
+
b\hat{\sigma}_s^2
\hat{\mathbf{v}}_s\hat{\mathbf{v}}_s^{\top}
+
c\hat{\sigma}_s^4
\hat{\mathbf{v}}_s\hat{\mathbf{v}}_s^{\top}
\right)
\hat{\sigma}_s
\hat{\mathbf{v}}_s
\hat{\mathbf{k}}_s^{\top}\\
&=
\left(
a\hat{\sigma}_s
+
b\hat{\sigma}_s^3
+
c\hat{\sigma}_s^5
\right)
\hat{\mathbf{v}}_s
\hat{\mathbf{k}}_s^{\top}\\
&=
\rho(\hat{\sigma}_s)
\hat{\mathbf{v}}_s
\hat{\mathbf{k}}_s^{\top},
\end{align*}
where
\[
\rho(\sigma)=a\sigma+b\sigma^3+c\sigma^5.
\]
For the coefficients used in Definition~\ref{def:ns5}, we have $\rho(\sigma)>0$ for all $\sigma>0$. Therefore, each non-degenerate NS-normalized update is a positive scalar multiple of the same rank-1 direction $\hat{\mathbf{v}}_s\hat{\mathbf{k}}_s^{\top}$.

Consequently, the momentum state can be written as
\begin{equation}
\mathbf{M}_t
=
\sum_{s=1}^{t}
\lambda_s
\hat{\mathbf{v}}_s
\hat{\mathbf{k}}_s^{\top},
\qquad
\lambda_s
:=
\gamma^{t-s}\rho(\hat{\sigma}_s)
>0.
\label{eq:rank_lambda_form}
\end{equation}

\paragraph{4. Attainability of the Upper Bound.}
Let
\[
r=\min(t,d,m).
\]
We construct one explicit configuration for which
$\mathrm{rank}(\mathbf{M}_t)=r$. Choose
$\hat{\mathbf{v}}_1,\ldots,\hat{\mathbf{v}}_r$ to be orthonormal vectors in
$\mathbb{R}^d$ and
$\hat{\mathbf{k}}_1,\ldots,\hat{\mathbf{k}}_r$ to be orthonormal vectors in $\mathbb{R}^m$. If $t>r$, choose the remaining directions as
\[
\hat{\mathbf{v}}_s=\hat{\mathbf{v}}_1,
\qquad
\hat{\mathbf{k}}_s=\hat{\mathbf{k}}_1,
\qquad
s=r+1,\ldots,t.
\]
Then Eq.~\eqref{eq:rank_lambda_form} becomes
\[
\mathbf{M}_t
=
\mu_1\hat{\mathbf{v}}_1\hat{\mathbf{k}}_1^{\top}
+
\sum_{s=2}^{r}
\lambda_s
\hat{\mathbf{v}}_s
\hat{\mathbf{k}}_s^{\top},
\]
where
\[
\mu_1
=
\lambda_1+\sum_{s=r+1}^{t}\lambda_s
>
0.
\]
Equivalently,
\[
\mathbf{M}_t
=
\mathbf{A}\mathbf{\Lambda}'\mathbf{B}^{\top},
\]
where
\[
\mathbf{A}
=
[\hat{\mathbf{v}}_1,\ldots,\hat{\mathbf{v}}_r]
\in\mathbb{R}^{d\times r},
\qquad
\mathbf{B}
=
[\hat{\mathbf{k}}_1,\ldots,\hat{\mathbf{k}}_r]
\in\mathbb{R}^{m\times r},
\]
and
\[
\mathbf{\Lambda}'
=
\mathrm{diag}(\mu_1,\lambda_2,\ldots,\lambda_r).
\]
The matrices $\mathbf{A}$ and $\mathbf{B}$ have full column rank $r$, and
$\mathbf{\Lambda}'$ is invertible. Hence
\[
\mathrm{rank}(\mathbf{M}_t)
=
\mathrm{rank}(\mathbf{A}\mathbf{\Lambda}'\mathbf{B}^{\top})
=
r
=
\min(t,d,m).
\]
Therefore, the upper bound is attainable.

\paragraph{5. Generic Tightness.}
It remains to show that the rank-deficient case is non-generic. For fixed positive coefficients $\lambda_1,\ldots,\lambda_t$, the entries of
\[
\mathbf{M}_t
=
\sum_{s=1}^{t}
\lambda_s
\hat{\mathbf{v}}_s
\hat{\mathbf{k}}_s^{\top}
\]
are polynomial functions of the direction coordinates
$\{(\hat{\mathbf{v}}_s)_i,(\hat{\mathbf{k}}_s)_j\}_{s,i,j}$.
The condition
\[
\mathrm{rank}(\mathbf{M}_t)<r
\]
is equivalent to the simultaneous vanishing of all $r\times r$ minors of $\mathbf{M}_t$. Each such minor is a polynomial in the direction coordinates.

The construction in Paragraph~4 shows that at least one $r\times r$ minor is not identically zero. Hence the set of direction pairs for which all $r\times r$ minors vanish is a proper algebraic subset of
\[
(S^{d-1}\times S^{m-1})^t.
\]
Such a proper algebraic subset has measure zero with respect to the product surface measure on the direction space. Therefore, under non-degenerate updates, the set of direction pairs
$\{(\mathbf{v}_s,\mathbf{k}_s)\}_{s=1}^{t}$ for which
\[
\mathrm{rank}(\mathbf{M}_t)<\min(t,d,m)
\]
has measure zero. This proves that the upper bound is generically tight.
\end{proof}

%% file: content/appendix_4.tex
In this appendix, we provide the parallel training algorithm for MuonSSM, followed by comprehensive details regarding the model architectures, training hyperparameters, and optimization settings used in our experiments across language, vision, and time-series modalities. All experiments were conducted on a single NVIDIA H100 GPU. 

\subsection*{B.1. Parallel Training Algorithm} \label{app:parallel_algorithm} Algorithm~\ref{alg:parallel} provides the parallel training procedure for MuonSSM. The key observation is that the coupled memory--momentum dynamics can be written as a block-affine recurrence, allowing the sequence to be evaluated using an associative scan. The Newton--Schulz normalization is applied locally at each timestep and therefore does not affect the asymptotic scan complexity.

\begin{algorithm}[tb]
   \caption{MuonSSM: Parallel Mode (Training)}
   \label{alg:parallel}
\begin{algorithmic}
    \STATE {\bfseries Input:} $\mathbf{K} \in \mathbb{R}^{L \times m}, \mathbf{V} \in \mathbb{R}^{L \times d}, \mathbf{Q} \in \mathbb{R}^{L \times m}$
   \STATE {\bfseries Gates:} $\boldsymbol{\alpha}, \boldsymbol{\beta} \in \mathbb{R}^{L}$ 
   \STATE {\bfseries Params:} $\delta, \gamma, \tau, \eta$

   \STATE \textcolor{gray}{// 1. Parallel Pre-computation}
   \FOR{$t=1$ {\bfseries to} $L$ \textbf{in parallel}}
        \STATE $\mathbf{X}_t \leftarrow \tau \beta_t \mathbf{v}_t \mathbf{k}_t^{\top}$ 
        \STATE $\mathbf{U}_t \leftarrow\mathrm{NS}\!\left({\mathbf{X}_t}\right)$
        \STATE $\mathbf{D}_t \leftarrow \alpha_t(\mathbf{I}_m - \beta_t\eta \mathbf{k}_t\mathbf{k}_t^{\top})$ 
   \ENDFOR
   
   \STATE \textcolor{gray}{// 2. Construct Associative Operators}
   \FOR{$t=1$ {\bfseries to} $L$ \textbf{in parallel}}
       \STATE $\mathbf{\Phi}_t \leftarrow \begin{bmatrix} \mathbf{D}_t & \mathbf{0}_{m \times m} \\ \gamma\mathbf{I}_m & \gamma\mathbf{I}_m \end{bmatrix} \in \mathbb{R}^{2m \times 2m}$
       \STATE $\mathbf{\Psi}_t \leftarrow \begin{bmatrix} \mathbf{U}_t & \mathbf{U}_t \end{bmatrix} \in \mathbb{R}^{d \times 2m}$
   \ENDFOR
   
   \STATE \textcolor{gray}{// 3. Parallel Associative Scan}
   \STATE $\{\mathcal{Z}_t\}_{t=1}^L \leftarrow \text{Scan}(\{(\mathbf{\Phi}_t, \mathbf{\Psi}_t)\}_{t=1}^L,\oplus)$ \footnotemark
   
   \STATE \textcolor{gray}{// 4. Extract Memory States}
   \STATE $\mathbf{S}_{1:L} \leftarrow [\mathcal{Z}_1[:, :m], \ldots, \mathcal{Z}_L[:, :m]]$ 
   
   \STATE \textcolor{gray}{// 5. Compute Outputs}
   \FOR{$t=1$ {\bfseries to} $L$ \textbf{in parallel}}
       \STATE $\mathbf{y}_t \leftarrow \mathbf{S}_t \mathbf{q}_t$
   \ENDFOR
   \STATE \textbf{Return} $\{\mathbf{y}_t\}_{t=1}^L$
\end{algorithmic}
\end{algorithm}

\subsection*{B.2. Language Modeling}
\label{app:language_details}

We base our language modeling experiments on the Gated DeltaNet architecture, replacing the standard Delta layers with our proposed MuonSSM blocks. The model configuration follows a standard small-scale setting (170M parameters) suitable for rigorous ablation. The specific architectural parameters are detailed in Table~\ref{tab:lm_arch}.

Models are pre-trained on the FineWeb-Edu 10B token dataset using a causal language modeling objective. We utilize the AdamW optimizer with a cosine learning rate decay schedule. To ensure stability, we employ gradient clipping and a warmup period. The full training configuration is provided in Table~\ref{tab:lm_hyperparams}.

\begin{table}[t]
    \centering
    \begin{minipage}[t]{0.48\textwidth}
        \centering
        \caption{Language model architecture configuration based on Gated DeltaNet specifications.}
        \label{tab:lm_arch}
        \vspace{0.1cm}
        \small
        \begin{tabular}{l|c}
            \toprule
            \textbf{Hyperparameter} & \textbf{Value} \\
            \midrule
            Layers ($N$) & 10 \\
            Heads ($H$) & 12 \\
            Model Dimension ($D$) & 672 \\
            Intermediate Dimension & 6144 \\
            Context Length & 4096 \\
            Vocab Size & 32000 \\
            \midrule
            Local Window Size & 2048 \\
            MuonSSM per Layer & 1 \\
            Rotary Percentage & 1.0 (100\%) \\
            Normalization & FusedRMSNorm ($\epsilon=1e-5$) \\
            MLP Type & LLaMA MLP \\
            Precision & bf16-mixed \\
            \bottomrule
        \end{tabular}
    \end{minipage}
    \hfill
    \begin{minipage}[t]{0.48\textwidth}
        \centering
        \caption{Language model training hyperparameters for pre-training on FineWeb-Edu 10B.}
        \label{tab:lm_hyperparams}
        \vspace{0.1cm}
        \small
        \begin{tabular}{l|c}
            \toprule
            \textbf{Hyperparameter} & \textbf{Value} \\
            \midrule
            \textbf{Data \& Batching} & \\
            Total Tokens & 10B ($1 \times 10^{10}$) \\
            Global Batch Size & 512 sequences \\
            Sequence Length & 4096 \\
            \midrule
            \textbf{Optimization (AdamW)} & \\
            Peak Learning Rate & $1 \times 10^{-3}$ \\
            Min Learning Rate & $1 \times 10^{-4}$ \\
            Weight Decay & 0.1 \\
            Betas ($\beta_1, \beta_2$) & $(0.9, 0.95)$ \\
            Gradient Clipping & 1.0 \\
            Warmup Tokens & 100M (1\% of total) \\
            \bottomrule
        \end{tabular}
    \end{minipage}
\end{table}

\subsection*{B.3. Vision Spatial Modeling }
\label{app:vision_details}

For visual representation learning, we adopt the training and evaluation protocols from MambaVision. We evaluate MuonSSM on three downstream tasks: Image Classification (ImageNet-1K), Object Detection (COCO), and Semantic Segmentation (ADE20K). We use the Tiny variant of the hierarchical architecture, replacing the spatial SSM mixers with MuonSSM layers. For object detection, we utilize the Mask R-CNN framework. For semantic segmentation, we employ the UperNet framework. Table~\ref{tab:vision_hyperparams} summarizes the specific hyperparameters used for each task.

\begin{table}[h]
    \centering
    \caption{Vision Training Hyperparameters. Settings for Classification, Object Detection, and Semantic Segmentation.}
    \label{tab:vision_hyperparams}
    \begin{tabular}{l|ccc}
        \toprule
        \textbf{Parameter} & \textbf{Classification} & \textbf{Object Detection} & \textbf{Semantic Seg.} \\
        \midrule
        Dataset & ImageNet-1K & MS COCO & ADE20K \\
        Framework & - & Mask R-CNN & UperNet \\
        Optimizer & LAMB & AdamW & AdamW \\
        Learning Rate (LR) & 5e-3 / 1e-4 & 1e-4 & 5e-5 \\
        Weight Decay & 0.05 & 0.05 & 0.01 \\
        Stochastic Depth & 0.2 & 0.2 & 0.3 \\
        Batch Size & 256 & 8 & 8 \\
        Training Duration & 310 epochs & 36 epochs & 160K iters \\
        \bottomrule
    \end{tabular}
\end{table}

\subsection*{B.4. Time-Series: Human Activity Recognition}
\label{app:timeseries_details}

We evaluate MuonSSM on the Multi-Modal Activity (MMA) benchmarks. The experimental setup strictly controls for data preprocessing and model size to isolate algorithmic improvements.

\textbf{Data Preprocessing.} Raw inertial signals (tri-axial accelerometer and gyroscope) are resampled and segmented into fixed-length windows of $L = 512$ with a 50\% overlap. Each channel is standardized to zero mean and unit variance using training set statistics. The resulting segments are formatted as tensors $X \in \mathbb{R}^{B \times L \times 6}$, where $B$ denotes the batch size.

\textbf{Model Configuration.} The backbone consists of a lightweight Conv1D front-end (kernel size 3, stride 1) for local feature extraction, followed by $N = 2$ stacked MuonSSM layers with a hidden dimension $d_{model} = 128$. A global average pooling layer and a compact linear classification head produce the final predictions. Dropout with $p=0.1$ is applied after the encoder and before the classification head.

\textbf{Training Setup.} All models are trained end-to-end using categorical cross-entropy loss. We use the Adam optimizer with the following schedule:
Initial Learning Rate: $1 \times 10^{-3}$, Weight Decay: $1 \times 10^{-4}$, Scheduler: Cosine annealing without restarts, Batch Size 16, Epochs: 50 with Early Stopping, patience = 10, Gradient Clipping: Global norm set to 1.0 to stabilize training dynamics.

\subsection*{B.5. Sensitivity Analysis and Robustness}

To evaluate the robustness of MuonSSM and its sensitivity to hyperparameter choices, we performed a comprehensive post-hoc analysis over two key parameters: the momentum coefficient $\gamma \in \{0.0, 0.5, 0.8, 0.9, 0.95, 0.99\}$ and the input scaling factor $\tau \in \{0.4, 0.5, 0.6, 0.8, 1.0, 1.2\}$.

\begin{figure*}[htbp]
\centering
\includegraphics[width=1\textwidth]{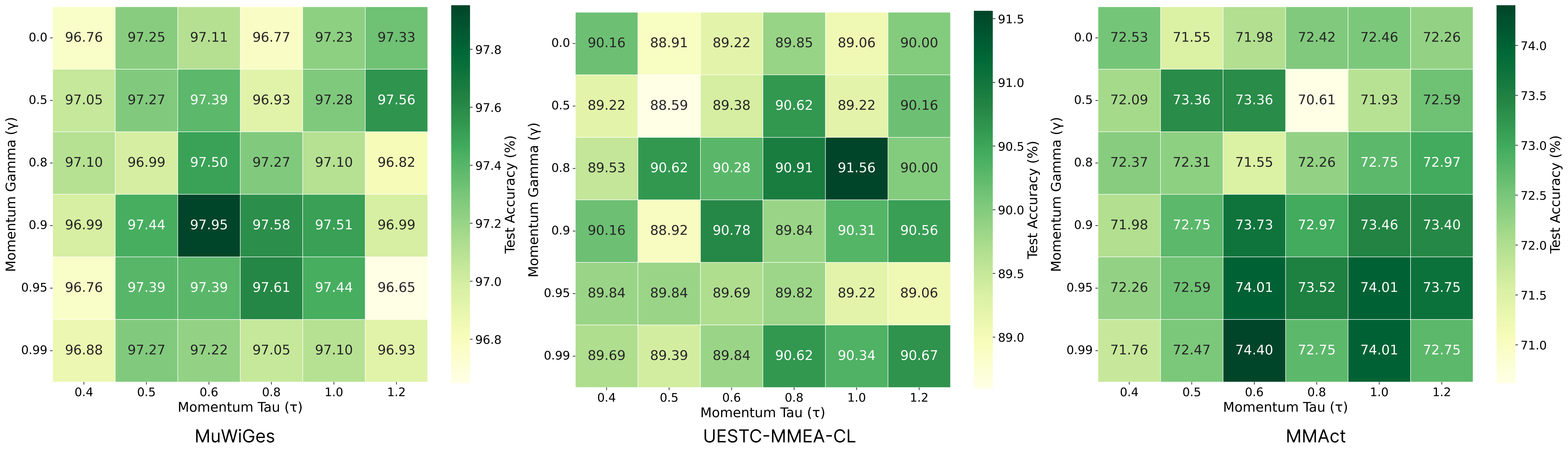}
\caption{Post-hoc sensitivity analysis: Heatmap showing test accuracy (\%) across different combinations of momentum tau ($\tau$) and momentum gamma ($\gamma$) parameters.}
\label{fig:beta_alpha_accuracy}
\end{figure*}

\textbf{Model Selection Protocol.} We emphasize that primary model selection was conducted strictly using validation sets. The results presented in Figure~\ref{fig:beta_alpha_accuracy} serve as a post-hoc sensitivity analysis to demonstrate the stability of MuonSSM across diverse data distributions.

\textbf{Results and Discussion.} As illustrated in the heatmaps, MuonSSM exhibits consistent performance gains over the baseline across a wide, principled range of hyperparameters. Specifically, within the recommended range of $\gamma \in [0.8,0.99]$ and $\tau \in [0.6, 0.8, 1.0]$ (12 configurations), the standard deviation in test accuracy remains minimal (e.g., $\pm0.27\%$ for MuWiGes, $\pm 0.63\%$ for UESTC-MMEA-CL, and $\pm0.72\%$ for MMAct).

Table ~\ref{tab:robustness_summary} provides a quantitative summary of these 12 configurations compared to the standard LongHorn baseline:
\begin{table}[h]
\centering
\small
\begin{tabular}{lcccc}
\toprule
\textbf{Dataset} & \textbf{Baseline} & \textbf{Min (in range)} & \textbf{Max (in range)} & \textbf{\% configs $\ge$ Base} \\
\midrule
MuWiGes & 97.23 & 97.05 & 97.95 & 67\% (8/12) \\
UESTC-MMEA-CL & 89.06 & 89.22 & 91.56 & 100\% (12/12) \\
MMAct & 72.47 & 71.55 & 74.40 & 83\% (10/12) \\
\bottomrule
\end{tabular}
\caption{Quantitative robustness summary across the principled hyperparameter range.}
\label{tab:robustness_summary}
\end{table}

This stability is theoretically grounded in our use of Newton--Schulz iteration, which bounds the singular values of updates ($\sigma_{max} \le 1$), preventing uncontrolled amplification regardless of the specific scaling factor $\tau$. These results suggest that MuonSSM can be reliably deployed with a default configuration (e.g., $\gamma=0.9, \tau=0.6$) without the need for extensive per-dataset tuning.

%% file: content/appendix.tex
\subsection*{C.1. Empirical Evidence for NS-Induced Rank Enrichment} \label{app:rank_enrichment_empirical}

Remark~\ref{rem:jacobian_rank} suggests that the backward geometry of Newton--Schulz (NS) normalization biases the learning signal toward directions less aligned with the current rank-one write. When accumulated through the momentum recurrence, these less collinear writes are expected to increase the effective rank of the momentum state $\mathbf{M}_t$. We empirically examine this mechanism using two complementary diagnostics.

\paragraph{1. Three-way normalization ablation.}
We compare three variants of the same backbone on MMAct under the same training setup: momentum alone, momentum with Frobenius normalization, and momentum with NS normalization. For each variant, we measure validation accuracy and the effective rank
\begin{equation}
r_{\mathrm{eff}}(\mathbf{M}_t)
=
\frac{\left(\sum_i \sigma_i(\mathbf{M}_t)\right)^2}
     {\sum_i \sigma_i^2(\mathbf{M}_t)}.
\label{eq:app_effective_rank}
\end{equation}
A higher effective rank indicates that the singular values of $\mathbf{M}_t$ are more evenly distributed, and hence that the momentum state uses more independent representational directions.

\begin{table}[t]
\centering
\caption{Three-way ablation isolating the role of NS in rank enrichment on
MMAct. NS yields both higher effective rank and better validation accuracy than
momentum alone and momentum with Frobenius normalization.}
\label{tab:rank_enrichment_ablation}
\begin{tabular}{lcc}
\toprule
Variant & Val. Acc. $\uparrow$ & Effective Rank $\uparrow$ \\
\midrule
Momentum only
& $72.04 \pm 0.58$
& $12.98 \pm 0.81$ \\
Momentum + Frobenius
& $72.53 \pm 0.82$
& $13.34 \pm 0.43$ \\
Momentum + Newton--Schulz
& $\mathbf{74.67 \pm 0.46}$
& $\mathbf{16.62 \pm 0.57}$ \\
\bottomrule
\end{tabular}
\end{table}

Frobenius normalization provides only a small improvement over momentum alone, increasing the effective rank from $12.98$ to $13.34$. In contrast, NS increases the effective rank to $16.62$ and improves validation accuracy by $2.14$ percentage points over Frobenius normalization. This suggests that NS contributes more than magnitude normalization: its backward geometry encourages less redundant update directions when accumulated in $\mathbf{M}_t$.

\paragraph{2. Rank truncation intervention.}
We further test whether the additional rank is functionally useful. After training a deep MuonSSM model, we apply SVD to the internal SSM representations and retain only the top-$k$ singular components at inference time, while keeping all model parameters fixed. The results show a clear accuracy drop in the low-rank regime, indicating that the additional singular directions preserved by MuonSSM contribute to downstream prediction.

\paragraph{Interpretation.}
Together, these diagnostics support the mechanism described in Remark~\ref{rem:jacobian_rank}. Momentum accumulation can increase the nominal rank by summing rank-one writes, but repeated writes may still remain highly collinear. Frobenius normalization controls the scale of each write, whereas NS changes the backward geometry by emphasizing directions orthogonal to the current write. Empirically, this leads to higher effective rank and better downstream accuracy, consistent with the proposed rank-enrichment mechanism.

\subsection*{C.2. Gradient Propagation Heatmaps}
\label{app:gradient_heatmaps}

Proposition~\ref{prop:gradient_momentum} shows that the momentum pathway provides an additional route for gradient propagation through the $(\gamma\mathbf{I}_m)^{T-t+1}$ block. Since practical models include nonlinearities, gating, normalization, and deep stacking, we complement this linear analysis with an empirical gradient-propagation diagnostic.

We compare Mamba and MuonMamba under the same language modeling setup with sequence length $2048$ over approximately $60$K training iterations. At selected iterations, we record the gradient norm with respect to the hidden state at
each timestep:
\[
g(n)
=
\left\|
\frac{\partial \mathcal{L}}{\partial \mathbf{h}_n}
\right\|_2 .
\]
Figure~\ref{fig:gradient_heatmaps} visualizes these values as heatmaps over timesteps and training iterations.

Vanilla Mamba exhibits a clear gradient decay pattern, with much smaller gradient norms for earlier tokens. In contrast, MuonMamba maintains a more uniform gradient profile across the sequence. This supports the interpretation that the momentum pathway mitigates long-range gradient attenuation in realistic nonlinear training, although it does not constitute a strict non-vanishing gradient guarantee.

\begin{figure}[t]
\centering
\includegraphics[width=0.75\linewidth]{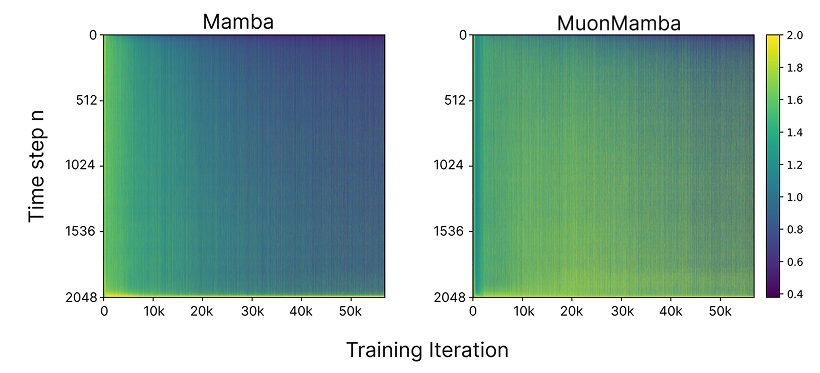}
\caption{
Gradient norm heatmaps over timesteps and training iterations. Each value is $\|\partial\mathcal{L}/\partial\mathbf{h}_n\|_2$. Compared with Mamba, MuonMamba shows more uniform gradient propagation across long contexts, supporting the structural gradient-preservation mechanism in Proposition~\ref{prop:gradient_momentum}.
}
\label{fig:gradient_heatmaps}
\end{figure}